\def\eqref#1{equation~\ref{#1}}
\def\1{\bm{1}}
\def\vmu{{\bm{\mu}}}
\def\vg{{\bm{g}}}
\def\vm{{\bm{m}}}
\def\vs{{\bm{s}}}
\def\vz{{\bm{z}}}
\DeclareMathAlphabet{\mathsfit}{\encodingdefault}{\sfdefault}{m}{sl}
\SetMathAlphabet{\mathsfit}{bold}{\encodingdefault}{\sfdefault}{bx}{n}
\DeclareMathOperator*{\argmin}{arg\,min}
\DeclareMathOperator{\sign}{sign}
\definecolor{LightGray}{gray}{1}
\newtheorem*{lemma}{Lemma}
\theoremstyle{definition}
\theoremstyle{definition}
\definecolor{lightgray}{gray}{0.95}
\definecolor{midgray}{gray}{0.55}
\definecolor{steelblue}{HTML}{4D82B7}
\definecolor{davysgrey}{rgb}{0.33, 0.33, 0.33}
\definecolor{LightCyan}{rgb}{0.88,1,1}
\definecolor{LightGold}{HTML}{F3E2C5}
\definecolor{AngelRow}{HTML}{FFFDD0}
\definecolor{ao(english)}{rgb}{0.0, 0.5, 0.0}
\definecolor{lightsalmon}{rgb}{1.0, 0.63, 0.48}
\definecolor{tabgreen}{HTML}{2CA02C}
\definecolor{newgreen}{HTML}{CCE0AC}
\newcommand{\ourcolor}{newgreen!50}
\newcommand{\methname}{{GradFix~}}
\newcommand{\tit}[1]{\smallbreak\noindent\textbf{#1 }}
\newcommand{\tinytit}[1]{\noindent\textbf{#1}}
\crefname{section}{Sec.}{Secs.}
\crefname{table}{Tab.}{Tabs.}
\crefname{figure}{Fig.}{Figs.}
\Crefname{section}{Section}{Sections}
\Crefname{table}{Table}{Tables}
\newcommand{\ourrow}{\rowcolor{\ourcolor}[\dimexpr\tabcolsep+0.1pt\relax]}
\DeclareRobustCommand\onedot{\futurelet\@let@token\@onedot}
\def\@onedot{\ifx\@let@token.\else.\null\fi\xspace}
\def\eg{\emph{e.g}\onedot} 
\def\ie{\emph{i.e}\onedot}
\newcommand{\PreserveBackslash}[1]{\let\temp=\\#1\let\\=\temp}
\newcolumntype{C}[1]{>{\PreserveBackslash\centering}p{#1}}
\newcolumntype{R}[1]{>{\PreserveBackslash\raggedleft}p{#1}}
\newcolumntype{L}[1]{>{\PreserveBackslash\raggedright}p{#1}}
\newcolumntype{Y}{>{\centering\arraybackslash}X}
\definecolor{MaterialRed}{HTML}{D32F2F}        %
\definecolor{MaterialBlue}{HTML}{1976D2}       %
\definecolor{MaterialGreen}{HTML}{388E3C}      %
\definecolor{MaterialOrange}{HTML}{F57C00}     %
\definecolor{MaterialPurple}{HTML}{7B1FA2}     %
\definecolor{MaterialTeal}{HTML}{00796B}       %
\definecolor{MaterialIndigo}{HTML}{303F9F}     %
\definecolor{MaterialBrown}{HTML}{5D4037}      %
\title{Gradient-Sign Masking for Task Vector Transport Across Pre-Trained Models}
\author{%
Filippo Rinaldi$^1$, Aniello Panariello$^1$, Giacomo Salici$^1$, Fengyuan Liu$^{2,3}$,\\
\textbf{Marco Ciccone$^2$, Angelo Porrello$^1$, Simone Calderara$^1$}\vspace{.5em}\\
$^1$University of Modena and Reggio Emilia, Italy \\
$^2$Vector Institute, Canada \quad $^3$University of Toronto, Canada \\[0.4em]
\texttt{name.surname@unimore.it, name.surname@vectorinstitute.ai}%
}
\begin{document}

\maketitle

\begin{abstract}
When a new release of a foundation model is published, practitioners typically need to repeat fine-tuning, even if the same task was already tackled in the previous version. A promising alternative is to reuse the parameter changes (\ie, task vectors) that capture how a model adapts to a specific task. However, these vectors often fail to transfer across different pre-trained models because their parameter spaces are misaligned. In this work, we show that successful transfer depends strongly on the gradient-sign structure of the new model. Based on this insight, we propose GradFix, which approximates the ideal sign structure and leverages it to transfer knowledge using only a handful of labeled samples. Notably, this requires no additional fine-tuning: we only compute a few target-model gradients without parameter updates and mask the source task vector accordingly. This yields an update that is locally aligned with the target loss landscape, effectively rebasing the task vector onto the new pre-training. We provide a theoretical guarantee that our method ensures first-order descent. Empirically, we demonstrate significant performance gains on vision and language benchmarks, consistently outperforming naive task vector addition and few-shot fine-tuning. We further show that transporting task vectors improves multi-task and multi-source model merging. Code is available at \url{https://github.com/fillo-rinaldi/GradFix}.
\end{abstract}

\section{Introduction}
Over the past few years, the paradigm in deep learning has shifted from training models from scratch to fine-tuning large pre-trained models. Adapting these large models to downstream tasks is advantageous, as it leads to stronger performance at a fraction of the cost. Such a shift has been evident in natural language processing and computer vision, where pre-trained models such as BERT~\citep{devlin2019bert}, CLIP~\citep{radford2021learning}, and their successors~\citep{openai2023gpt04,liu2023llava} have become the standard starting point for developing new applications.

Since companies and researchers often update checkpoints using more data or improved training pipelines, practitioners frequently need to repeat fine-tuning on the same downstream tasks. This creates redundancy: the work invested in adapting one release is not directly reusable on the next. To address this issue, several lines of research have investigated how to systematically relate or transfer knowledge across parameter spaces. The model rebasin literature~\citep{ainsworth2023git,rinaldi2025update} studies how to align and merge independently trained models by exploiting permutation symmetries in their parameters. In parallel, Task Arithmetic~\citep{ilharco2023task,ortizjimenez2023tangent,yadav2023tiesmerging,panariello2025accurate,porrello2026dataless} has shown that task vectors (\ie, the difference $\tau=\theta^{ft}-\theta^0$ between base and fine-tuned parameters) can be added, subtracted, and merged across models to induce new capabilities. On a similar note, the literature on \emph{mode connectivity}~\citep{garipov2018loss,frankle2020linear} demonstrates that different fine-tuned solutions can be linked by low-loss paths, highlighting that model parameters encode highly structured and transferable representations. Together, these advances suggest that parameters encode \textit{rich} and \textit{transferable} structure that can be systematically manipulated to obtain the desired behavior at reduced cost.

In particular, \citet{rinaldi2025update} formalizes this setting and proposes a technique to transport task vectors across transformer-based architectures. Yet a large gap remains between the transported fine-tune and an actual fine-tuned model, which highlights a key challenge: \textit{while task vectors are informative about adaptation, their direct transfer across different pre-trained models is not guaranteed to align with the loss geometry of the target model}. In fact, naive transfer may introduce harmful directions in parameter space, \ie, components of the task vector that are misaligned with the descent directions of the target loss, thus increasing the loss and limiting its effectiveness. Addressing this problem is crucial both for reducing the cost of adapting rapidly evolving foundation models and for enabling their use in low-data regimes, where re-running full fine-tuning is infeasible.

In this work, we introduce a framework for transporting task-specific knowledge across pre-trained models using \textbf{gradient-sign masking}. Our key insight is that, although a fine-tuning trajectory encodes valuable task information, its effectiveness on a new pre-trained model depends on the local loss geometry of the target. Inspired by findings from the optimization and distributed training literature~\citep{bernstein2018signsgd,alistarh2017qsgd}, we exploit the observation that the sign of the gradient provides a robust surrogate for the descent direction. Leveraging this insight, we introduce a simple yet effective method to transport a task vector from a source model to a target pre-trained model: we mask the source task vector using the gradient signs of the target, keeping only the components aligned with the target’s local loss landscape. We further provide a formal guarantee that, to first order, this transported update reduces the target loss, ensuring a principled safeguard against harmful or misaligned transfer.

Empirically, we show that this method enables highly effective transfer of fine-tuning knowledge from an outdated pre-trained model to a newer one, even in the low-data regime where gradients can only be estimated from a handful of samples, partially closing the gap between naive transfer and full fine-tuning on the target model. We also evaluate GradFix in model-merging pipelines, showing gains in both multi-task and multi-source transport settings. Our contributions are:
\begin{itemize}
     \item We establish a theoretical connection between the \emph{oracle task vector}, the ideal fine-tuning update on the target model, and quantities we can actually compute, namely the source task vector and the gradient at the zero-shot target model. We show that the sign of the zero-shot gradient provides a reliable proxy for the descent directions encoded in the target model.
    \item Building on this insight, we propose \textbf{GradFix}, a simple mechanism that filters the source task vector using the target model's local loss geometry. We formally prove that, to first order, the transported update reduces the target loss.

    \item We empirically show that our method enables effective transport of fine-tuning knowledge across pre-trained models in both vision and text domains, even in the \emph{low-data regime} where gradients must be estimated from only a handful of samples. We further validate that GradFix improves model-merging performance in both multi-task and multi-source settings, showing that the transported updates remain useful beyond single-task transfer.
\end{itemize}

\section{Related Work}
\tit{Model merging.} Prior work studies how to merge fine-tuned checkpoints from the same pre-trained model. Model soups show that simple weight averaging can improve generalization \citep{wortsman2022model}. Task Arithmetic interprets fine-tuning deltas as task vectors that can be combined through linear operations \citep{ilharco2023task}, and TIES-Merging addresses conflicts by enforcing sign consistency \citep{yadav2023tiesmerging}. Recent extensions include curvature-aware composition for improved adaptation \citep{porrello2025a} and modular embedding recomposition for more flexible transfer across diverse tasks \citep{panariello2025modular}.

\tit{Model rebasin.} A different family of methods focuses on explicit rebasing, aligning independently pre-trained models into a shared parameterization so that task vectors can be transported across different pre-trainings. Git Re-Basin introduced permutation matching to map two networks into a common basin \citep{ainsworth2023git}. For transformers, \citet{imfeld2024transformer} applies Optimal Transport to softly align components, while \citet{rinaldi2025update} proposes permutation- and spectral-based procedures that enable task vector transfer across distinct pre-trained models. Extensions such as permutation least-squares \citep{nasery2025pleas} and supernet formulations \citep{stoica2024zipit} address more heterogeneous or large-scale settings.

\tit{Gradient information.} A complementary line of work studies the utility of gradient signs and compressed gradient information. SignSGD (and its majority-vote variant) shows that one-bit sign information can suffice for convergence in distributed settings \citep{bernstein2018signsgd}, while quantization and error-feedback analyses formalize guarantees for compressed updates \citep{alistarh2017qsgd,karimireddy2019error}. More recent methods leverage gradient magnitudes or sign statistics for efficient adaptation: Gradient-Mask Tuning masks low-importance parameters during LLM fine-tuning \citep{li2025enhancing}, and sign-based federated variants weight client updates to address heterogeneity \citep{park2024signsgd}.

\section{Preliminaries}
Let $\theta_A$ and $\theta_B$ denote the parameters of the same architecture, pre-trained on different datasets (or with different hyperparameters). The fine-tuned $\theta_A$ on a downstream task is denoted by $\theta_A^{ft}$.

\tinytit{Model rebasin.} The goal of rebasin~\citep{ainsworth2023git} is to align two independently trained models by mapping the parameters of one into the loss basin of the other, so that they become functionally compatible. This setting concerns only the pre-trained weights, without involving fine-tuning updates.

\tinytit{Task Arithmetic.} A complementary perspective to model rebasin is offered by \emph{Task Arithmetic}~\citep{ilharco2023task, yadav2023tiesmerging}, which studies linear operations on task-specific parameter updates. Given a pre-trained model $\theta^0$ and a fine-tuned counterpart $\theta^\star$, the difference vector in parameter space $\tau := \theta^\star - \theta^0$ is called a \emph{task vector}. Task vectors describe how a base model adapts to the task and can be added, subtracted, or merged to induce new behaviors. This setting usually assumes that all models share the same initialization $\theta^0$, which ensures comparability across tasks.

\tinytit{Our setting.}
In contrast, our goal is to apply task-vector transfer when the target and source models do not share the same initialization: we want to transfer $\tau_A = \theta_A^{ft} - \theta_A$ from a source pre-trained model $\theta_A$ to a different pre-trained model $\theta_B$. This connects to rebasin in that the bases differ, but unlike traditional rebasin approaches, we do not seek to explicitly find a permutation to align parameters. Instead, we ask:
\begin{center}
\emph{Which components of $\tau_A$ are truly transferable, and which would instead harm $\theta_B$?}
\end{center}
This question motivates our method, which leverages the local gradients of $\theta_B$ to selectively filter $\tau_A$ into a compatible and transferable update, effectively performing direct task vector transportation.
\section{Method}
Here we introduce \methname, a framework for transferring task vectors across different pre-trained models by filtering them with gradient information from the target model. As a conceptual starting point, we first consider an \emph{oracle} setting where the target task vector obtained from full fine-tuning defines the ideal transferable directions (\cref{sec:method}). We then approximate this oracle with a \emph{single gradient step} on the target model, using anti-gradient signs to estimate the direction of the full fine-tuning trajectory. This yields a \emph{gradient-sign mask} that selectively filters the source task vector into a compatible update (\cref{sec:transporting}). Finally, we extend the approach to the limited-data regime, where gradients are estimated from only a handful of labeled samples (\cref{sec:limited_data_regime}).
\subsection{\methname (Gradient-Sign Masking)}\label{sec:method}
\begin{figure}
    \centering
    \includegraphics[width=.8\linewidth]{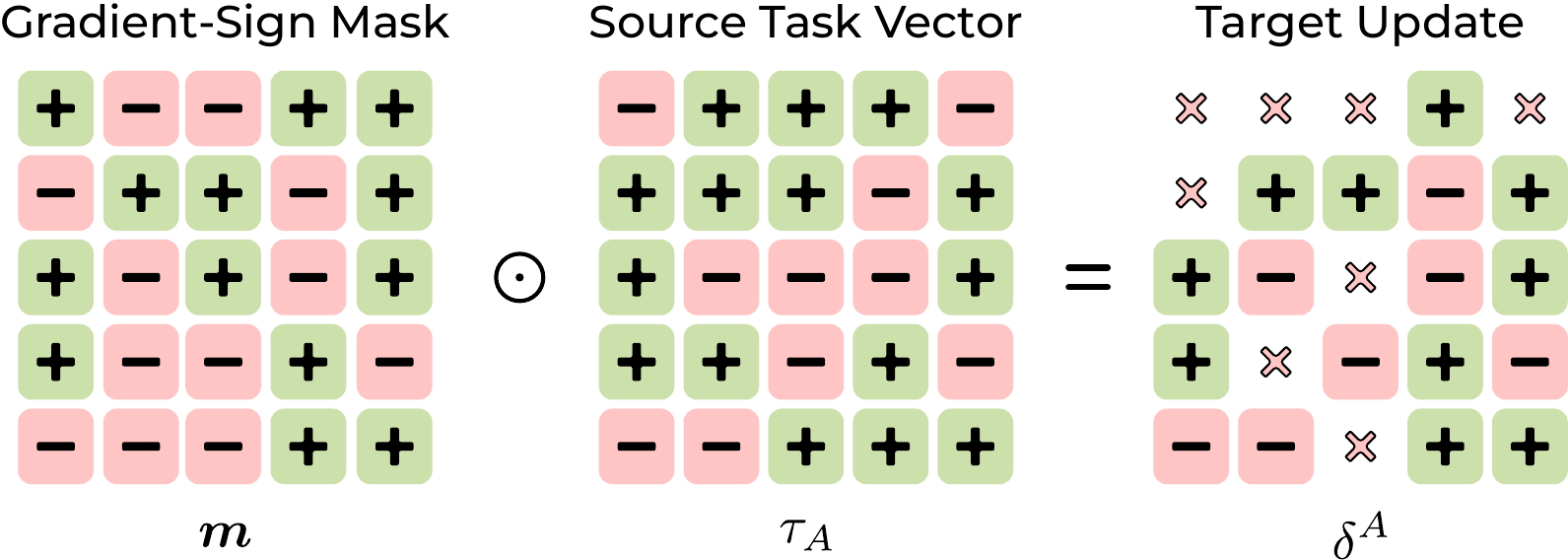}
    \caption{Illustration of our masking procedure. The gradient mask $\vm$ suppresses harmful directions in the task vector $\tau_A$ while preserving those aligned with the target model.}\label{fig:method}
\end{figure}

We begin by considering an ideal scenario where the true fine-tuned task vector $\tau_B=\theta_B^{ft}-\theta_B$ of model $B$ and the full target dataset $\mathcal{D}$ are available. This vector represents the optimal parameter change to adapt $B$ to $\mathcal{D}$. In this setting, we can construct a mask that retains only the components of a candidate update (\eg, $\tau_A$) that are aligned with $\tau_B$, ensuring that every retained coordinate contributes to decreasing the loss. In other words, $\tau_B$ (or its sign structure) defines the gold standard for locally beneficial directions.
Formally, we define the oracle mask \(\vm^{\star} \in {\{0,1\}}^d\), where \(d\) is the total number of model parameters and \(i \in \{1, \dots, d\}\) indexes a parameter coordinate:
\begin{equation}\label{eq:mask_oracle}
    m^{\star}_i = \mathbbm{1}\{\sign(\tau_{A,i}) = \sign(\tau_{B,i})\}.
\end{equation}
As shown in \cref{fig:method}, applying this mask to $\tau_A$ produces the oracle-masked update $\delta^{\star}$, which preserves only the components consistent with $\tau_B$:
\begin{equation}\label{eq:t_oracle}
    \delta^{\star} := \vm^{\star} \odot \tau_A,
\end{equation}
where $\odot$ denotes element-wise multiplication. This vector $\delta^{\star}$ represents a reliable transfer of $\tau_A$ onto $\theta_B$, since it filters out all components of $\tau_A$ that are misaligned with the true adaptation directions of $B$. In practice, however, $\tau_B$ (and thus $\delta^{\star}$) is unavailable because it requires access to the fine-tuned target model $\theta_B^{ft}$, which defeats the purpose of transporting the solution from $A$ to $B$. To approximate this ideal mask, we use the gradient of the zero-shot target model as a surrogate for $\tau_B$, since it captures locally beneficial directions:
\begin{equation}
\vg := \nabla_\theta \mathcal{L}(\theta_B), \quad
\mathcal{L}(\theta) := \mathbb{E}_{(x,y) \sim \mathcal{D}}[\ell(f_\theta(x),y)],
\end{equation}
where $\ell$ is the training objective (\eg, cross-entropy) and $(x, y)$ is a labeled example from $\mathcal{D}$. Based on this gradient, we define the gradient-sign mask $\vm$, which retains only the components of $\tau_A$ whose sign matches that of the corresponding anti-gradient coordinate:
\begin{equation}\label{eq:mask_grad}
    m_i := \mathbbm{1}\{\sign(\tau_{A,i}) = \sign(-g_i)\}.
\end{equation}
Intuitively, $-\vg$ acts as a signal for local alignment with the loss geometry of $B$. Notably, in the idealized setting where $B$ is fine-tuned using full-batch gradient descent for a single epoch, the resulting task vector $\tau_B$ is proportional to $-\vg$, so the gradient-sign mask coincides with the oracle. This observation justifies using the gradient-sign mask as an approximation of the ideal update, even when only a few labeled examples are available. The mask retains only components of $\tau_A$ whose sign matches the anti-gradient of $\mathcal{L}(\theta_B)$, pruning coordinates that would increase the loss for target model $B$. In this way, the gradient-sign mask serves as a practical surrogate for the trajectory-informed directions encoded in the unavailable $\tau_B$.

\subsection{Transporting the Update}\label{sec:transporting}
Given the gradient-sign mask $\vm$ from~\cref{eq:mask_grad}, we define the updated target parameters by directly applying the masked task vector with a scaling factor $\alpha>0$:
\begin{equation}\label{eq:update}
    \theta_B^{\text{trans}} = \theta_B + \delta^A, \quad \delta^A := \alpha \,(\vm \odot \tau_A).
\end{equation}

We explicitly use the convention $\vg=\nabla_\theta \mathcal{L}(\theta_B)$ (ascent direction), so useful transport directions should align with $-\vg$.
It is important to note that $\tau_A$ points in a descent direction for model $A$, whereas the gradient $\vg$ of the target model points in the ascent direction of its loss. By selecting only coordinates aligned with $-\vg$ and then adding $\delta^A$, each retained component moves along a descent-aligned direction for $B$. In contrast, \(\delta^\star\) is oracle-aligned with the target task direction because its mask is constructed from \(\tau_B\).

This construction induces a coordinate-wise filtering effect: we do not alter the direction of retained entries, but only suppress coordinates that are sign-incompatible with the target anti-gradient. As a result, the transported update preserves task information from $\tau_A$ while reducing the risk of injecting locally harmful directions into $\theta_B$. We now formalize this intuition with a first-order loss analysis.

\tit{Descent guarantee.} To understand why this gradient masking provides effective transfer, we analyze its effect on the loss of the target model $B$. Consider the transported update from~\cref{eq:update}. By expanding the target loss $\mathcal{L}$ around $\theta_B$ via a first-order Taylor approximation, we obtain:
\begin{equation}
    \mathcal{L}(\theta_B + \delta^{A}) \approx \mathcal{L}(\theta_B) + \vg^\top \delta^A, \quad \text{where} \ \ \vg = \nabla_\theta \mathcal{L}(\theta_B).
\end{equation}

The sign of the inner product $\vg^\top \delta^A$ determines whether the update increases or decreases the loss to first order. By construction, the gradient-sign mask $\vm$ retains only components of $\tau_A$ that are aligned with $-\vg$. Concretely, for each coordinate $i$, we have:
\begin{equation}
g_i \cdot (m_i \tau_{A,i}) =
\begin{cases}
-|g_i|\,|\tau_{A,i}|, & \text{if } \sign(\tau_{A,i})=\sign(-g_i),\\
0, & \text{otherwise},
\end{cases}
\end{equation}
which is always nonpositive. Coordinates with $g_i=0$ or $\tau_{A,i}=0$ are naturally covered by this expression and contribute zero. Therefore, the overall inner product satisfies the following:
\begin{equation}
   \vg^\top \delta^A
    = -\alpha \sum_i m_i |g_i|\,|\tau_{A,i}|
    \;\le\; 0.
\end{equation}
Thus, for sufficiently small $\alpha$, the update $\delta^A$ is guaranteed to be a descent direction for $\mathcal{L}$. Practically, the mask removes all sign-mismatched components of $\tau_A$, so that every retained entry contributes to reducing the loss. Without masking, $\tau_A$ could contain harmful directions that increase the loss for $B$; with masking, the transported update is locally aligned with the descent geometry of the target model.

\subsection{Limited Data Regime}\label{sec:limited_data_regime}

In \cref{sec:method}, we have assumed access to the full target dataset $\mathcal{D}$ to compute the gradient $\vg$ at the zero-shot target model $\theta_B$. In practice, one of the main motivations for task vector transport is the \textit{few-shot} or limited data regime. If we had access to the entire dataset, we could directly fine-tune $\theta_B$ to obtain $\theta_B^{ft}$, making task vector transfer unnecessary in that setting.

When only a small number of samples is available, we estimate the anti-gradient signs using a subset of labeled examples. Let $\mathcal{D}_s \subset \mathcal{D}$ denote a small subset of $N$ samples. For each parameter coordinate $i$, we compute the sign of the anti-gradient via \textbf{majority voting} across these samples:
\begin{equation}\label{eq:majority-voting}
    \hat s_i = \sign\left(-{\sum}_{(x_n,y_n)\in \mathcal{D}_s} \sign\left(\nabla_{\theta} \ell\left(f_{\theta_B}(x_n), y_n\right)\right)\right).
\end{equation}
\begin{lemma}[Concentration of Majority Vote Sign Estimator]\label{lem:majority_vote}
Let $p_i = \Pr[\sign(\nabla_{\theta} \ell(f_{\theta_B}(x), y)) = \sign(g_i)]$ denote the probability that a single-sample gradient sign matches the true gradient sign at coordinate $i$. Then, under mild independence assumptions and for $p_i > 1/2$, the majority-vote estimator satisfies:
\begin{equation}
    \Pr\left[\hat s_i = \sign(-g_i)\right] \ge 1 - \exp\left(-2 N {(p_i - 1/2)}^2\right),
\end{equation}
which shows that the estimated sign concentrates around the true anti-gradient direction as the number of samples $N$ grows.
\end{lemma}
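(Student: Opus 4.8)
The plan is to reduce the statement to a standard Hoeffding-type concentration bound for the sign of a sum of i.i.d. bounded random variables. First, fix a coordinate $i$ and, for each sample $(x_n, y_n) \in \mathcal{D}_s$, define the Bernoulli-type indicator $Z_n = \mathbbm{1}\{\mathrm{sign}(\nabla_\theta \ell(f_{\theta_B}(x_n), y_n))_i = \mathrm{sign}(\tau_{B,i})\}$. Under the stated independence assumptions these are i.i.d.\ with $\mathbb{E}[Z_n] = p_i$, and by hypothesis $p_i > 1/2$. The majority-vote estimator $\hat{s}_i$ agrees with $\mathrm{sign}(\tau_{B,i})$ precisely when strictly more than half of the per-sample signs match, i.e.\ when $\sum_n Z_n > N/2$ (handling the tie case $N$ even by noting it only helps, or by assuming the $\mathrm{sign}$ of $0$ is broken adversarially against us, which is the worst case).

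The core step is then the estimate
\begin{equation}
\Pr\!\left[{\sum}_{n=1}^N Z_n \le N/2\right]
= \Pr\!\left[\frac{1}{N}{\sum}_{n=1}^N (p_i - Z_n) \ge p_i - \tfrac12\right]
\le \exp\!\left(-2N\,(p_i - \tfrac12)^2\right),
\end{equation}
which is exactly Hoeffding's inequality applied to the bounded variables $Z_n \in [0,1]$ with deviation $t = p_i - 1/2 > 0$. Taking the complementary event gives $\Pr[\hat{s}_i = \mathrm{sign}(\tau_{B,i})] \ge 1 - \exp(-2N(p_i - 1/2)^2)$, and since $g_i$ in \cref{eq:mask_grad} is (the sign of) the aggregated quantity from \cref{eq:majority-voting}, this is the claimed bound. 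The final sentence of the lemma is then immediate: the exponent is strictly negative and scales linearly in $N$, so the bound tends to $1$ as $N \to \infty$, at a rate governed by the margin $(p_i - 1/2)$.

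The main obstacle is not the concentration argument itself, which is routine, but justifying the claim $p_i > 1/2$ under the \emph{``mild independence assumptions.''} Here I would argue as follows: in the idealized single-epoch full-batch regime discussed in \cref{sec:method}, the oracle task vector satisfies $\tau_B \propto -\vg = -\mathbb{E}_{(x,y)\sim\mathcal{D}}[\nabla_\theta \ell(f_{\theta_B}(x),y)]$, so $\mathrm{sign}(\tau_{B,i}) = -\mathrm{sign}(\mathbb{E}[\,\partial_i \ell\,])$. It then suffices that, for each coordinate, the per-sample partial derivative $\partial_i \ell(f_{\theta_B}(x),y)$ agrees in sign with its own expectation with probability strictly above $1/2$ — a weak anti-concentration condition on the per-coordinate gradient distribution (e.g.\ it holds whenever the coordinate distribution is not too heavy-tailed and the mean is not zero, and in particular for any symmetric-around-a-nonzero-mean or unimodal distribution). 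I would state this as the explicit content of the "mild independence assumptions" and note that it is the only place the structure of $\ell$ and $\mathcal{D}$ enters; everything downstream is distribution-free. One should also flag that the bound is stated per coordinate; a union bound over the $d$ coordinates would cost an extra $\log d$ factor in the required $N$ if a simultaneous guarantee is desired, though the lemma as stated does not claim this.
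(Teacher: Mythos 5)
Your proof follows essentially the same route as the paper's: apply Hoeffding's inequality to the i.i.d.\ indicator variables $Z_n$ (the paper calls them $X_n$) to bound the failure probability of the majority vote, and justify $p_i > 1/2$ via an anti-concentration argument on the per-sample gradient. The paper is slightly more concrete in the second step, positing an explicit additive symmetric-noise model $g_i^{(n)} = g_i + \varepsilon_i^{(n)}$ with $\varepsilon$ symmetric about zero, which is a special case of the sufficient conditions you sketch; otherwise the two arguments coincide.
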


We provide a proof of this lemma in \cref{sec:appendix_majority_vote}, which uses Hoeffding's inequality~\citep{hoeffding1963probability}. In practice, even a few samples provide a robust estimate of the true descent direction. Each gradient acts as a vote for the correct sign, and majority voting filters out noisy or conflicting directions. This implies that, with high probability, the masked task vector $\delta^A$ points in a descent direction, preserving the first-order loss reduction behavior of the full-data update. As shown in \cref{sec:masking_ablation}, this approach is robust to small sample sizes, making it particularly attractive when direct fine-tuning of $\theta_B$ is expensive or prone to overfitting.
Compared to mean-based aggregation, majority voting is less sensitive to outlier magnitudes because it depends only on sign frequency and provides a more stable transfer (\cref{sec:sensitivity}).

\Cref{grad_sign_alg} outlines the gradient-sign masked task vector transport procedure, showing how the source task vector is selectively applied to the target model using only a small subset of labeled data.
\begin{algorithm}[t]
    \caption{Gradient-Sign Masked Transport}
\label{grad_sign_alg}
    \begin{algorithmic}[1]
        \Require Source model $\theta_A,\;\theta_A^{ft}$, target model $\theta_B$, target data subset $\mathcal{D}_s$, scaling $\alpha$.
        \Ensure Transported model $\theta_B^{trans}$
        \smallskip
        \State Compute source task vector: $\tau_A \leftarrow \theta_A^{ft} - \theta_A$
        \For{$(x_n,y_n)\in \mathcal{D}_s$}
            \State $g^{(n)} \leftarrow \nabla_\theta \ell(f_{\theta_B}(x_n),y_n)$
        \EndFor
        \State Compute anti-gradient signs $\hat s_i$ by majority voting\Comment{\cref{eq:majority-voting}}
        \State Build gradient-sign mask: $m_i \leftarrow \mathbbm{1}\{\sign(\tau_{A,i}) = \hat s_i\}$
        \State Compute transported update: $\delta^A \leftarrow \alpha \,(\vm \odot \tau_A)$
        \State \Return $\theta_B^{trans} \leftarrow \theta_B + \delta^A$ \Comment{Updated target model}
    \end{algorithmic}
\end{algorithm}

\section{Experimental Results}
\tit{Implementation details.}
For the vision settings, we consider CLIP ViT-B/16 and ViT-L/14 Vision Transformers~\citep{radford2021learning}, implemented in Open-CLIP~\citep{cherti2023reproducible}. The source pre-trained weights are denoted $\theta_A$ and the target pre-trained weights are denoted $\theta_B$. For ViT-B/16, $\theta_A$ was pre-trained on Datacomp XL (\texttt{s13b}, \texttt{b90k}) and $\theta_B$ on LAION-2B (\texttt{s34b}, \texttt{b88k}). For ViT-L/14, $\theta_A$ was pre-trained on Datacomp XL (\texttt{s13b}, \texttt{b90k}) and $\theta_B$ on LAION-2B (\texttt{s32b}, \texttt{b82k}). For the language settings, we use T5-base variants~\citep{raffel2020exploring}. As $\theta_A$, we use \texttt{T5v1.1}, pre-trained on C4~\citep{raffel2020exploring} without supervised training. As $\theta_B$, we use \texttt{FLAN-T5}~\citep{chung2024scaling}, pre-trained and instruction-tuned on several datasets, including GSM8K~\citep{cobbe2021training}, AQUA-RAT~\citep{ling2017program}, and LAMBADA~\citep{paperno2016lambadadatasetwordprediction}.
Task vectors were obtained following the fine-tuning protocol of~\citet{ilharco2023task}: $2000$ iterations, batch size $128$, learning rate $1\times10^{-5}$, cosine annealing with $200$ warm-up steps, AdamW optimizer~\citep{loshchilov2018decoupled}, weight decay $0.1$. The text encoder backbone was kept frozen following~\citet{cherti2023reproducible}. For fairness, all compared transport methods use the same source/target checkpoints and the same supervision budget $\mathcal{D}_s$, and results are aggregated across random seeds. The $\alpha$ is chosen through a validation set following standard model merging pipelines, more details in \cref{sec:hyperparams}.

\tit{Baselines.}
We evaluate our method against several baselines. As a lower bound, we consider the zero-shot target model ($\theta_B$ \textit{zero-shot}), \ie, the base model without fine-tuning. As an upper bound, we report $\theta_B + \delta^{\star}$, obtained by adding the source task vector $\tau_A$ masked with the signs of the true task vector $\tau_B$ to the target model. We also include the performance of the fully fine-tuned target model ($\theta_B$ \textit{fine-tune}) and the naive Task Arithmetic transport ($\theta_B + \tau_A$). In addition, we compare against \textit{TransFusion}~\citep{rinaldi2025update}, which transports task vectors across transformer-based models via permutation alignment. Finally, we report the performance of a target model fine-tuned with the same number of randomly sampled examples per class $|\mathcal{D}_s^c|$ used by our approach.

\tit{Supervision budget $\mathcal{D}_s$.}
In all experiments, the subset $\mathcal{D}_s$ is drawn from the full downstream fine-tuning dataset $\mathcal{D}$ and constitutes only a fraction of its size. Throughout the tables, $|\mathcal{D}_s^c|$ indicates the number of examples \emph{per class} used to estimate anti-gradient signs for the target model $\theta_B$. The corresponding proportions of $\mathcal{D}$ used to form $\mathcal{D}_s$ are provided in \cref{sec:supp_datasets}. Additionally, we include an ablation study in \cref{sec:ablation_subset_selection} analyzing the impact of different subset selection heuristics on anti-gradient sign estimation.

\begin{table}[t]
\caption{Cross-dataset performance comparison for ViT-B/16 and ViT-L/14 models.}\label{tab:unified_comparison}
\centering
\resizebox{\linewidth}{!}{
\begin{tabular}{L{1.8cm} c cc cc cc cc cc}
    \toprule
     &  & \multicolumn{2}{c}{\textbf{EUROSAT}} & \multicolumn{2}{c}{\textbf{SVHN}} & \multicolumn{2}{c}{\textbf{GTSRB}} & \multicolumn{2}{c}{\textbf{RESISC45}} & \multicolumn{2}{c}{\textbf{DTD}} \\
    \cmidrule(lr){3-4} \cmidrule(lr){5-6} \cmidrule(lr){7-8} \cmidrule(lr){9-10} \cmidrule(lr){11-12}
     \textbf{Model} & $|\mathcal{D}_s^c|$ & \textbf{B/16} & \textbf{L/14} & \textbf{B/16} & \textbf{L/14} & \textbf{B/16} & \textbf{L/14} & \textbf{B/16} & \textbf{L/14} & \textbf{B/16} & \textbf{L/14} \\
    \midrule
    $\theta_{B}$ \textit{zero-shot} & - & $49.41$ & $62.80$ & $50.58$ & $37.28$ & $48.29$ & $56.12$ & $67.98$ & $73.12$ & $55.96$ & $63.35$ \\
    $\theta_{B}$ \textit{fine-tune} & - & $98.70$ & $98.95$ & $97.45$ & $97.80$ & $98.65$ & $99.16$ & $95.66$ & $97.06$ & $83.19$ & $83.56$ \\
    $\theta_{B}$ + $\tau_A$ & - & $49.58$ & $62.77$ & $50.84$ & $39.09$ & $49.31$ & $56.03$ & $67.87$ & $73.49$ & $56.27$ & $63.56$ \\
    $\theta_{B} + \delta^\star$ & - & $95.06$ & $96.75$ & $92.04$ & $92.60$ & $82.92$ & $88.65$ & $87.06$ & $90.30$ & $71.44$ & $72.66$ \\
    TransFusion & - & $50.12$ & $63.21$ & $53.26$ & $37.38$ & $50.24$ & $56.78$ & $67.99$ & $73.36$ & $56.70$ & $64.10$ \\
    \midrule
    $\theta_{B}^{opt}$ & $1$ & $56.61$ & $64.65$ & $61.32$ & $62.51$ & $56.08$ & $63.97$ & $69.25$ & $74.54$ & $56.21$ & $63.76$ \\
    \ourrow
    $\theta_{B} + \delta^A$ & $1$ & $61.94$ & $69.67$ & $71.07$ & $70.15$ & $60.88$ & $66.82$ & $70.05$ & $76.45$ & $58.32$ & $65.50$ \\
    \midrule
    $\theta_{B}^{opt}$ & $2$ & $59.49$ & $70.76$ & $62.01$ & $45.23$ & $61.70$ & $69.91$ & $71.20$ & $76.62$ & $57.00$ & $64.97$ \\
    \ourrow
    $\theta_{B} + \delta^A$ & $2$ & $65.07$ & $74.10$ & $70.19$ & $54.31$ & $64.33$ & $71.55$ & $71.42$ & $76.97$ & $58.51$ & $66.10$ \\
    \midrule
    $\theta_{B}^{opt}$ & $5$ & $61.99$ & $69.75$ & $67.03$ & $67.11$ & $63.08$ & $73.25$ & $73.01$ & $75.41$ & $59.65$ & $66.72$ \\
    \ourrow
    $\theta_{B} + \delta^A$ & $5$ & $66.05$ & $75.59$ & $73.59$ & $74.41$ & $66.61$ & $73.14$ & $71.57$ & $76.82$ & $60.02$ & $66.95$ \\
    \bottomrule
\end{tabular}}
\end{table}

\subsection{Transport Experiments}\label{sec:main_transport}
\tit{Transport in the vision setting.} \Cref{tab:unified_comparison} summarizes the results of task vector transport across CLIP ViT-B/16 and CLIP ViT-L/14 architectures, averaged over multiple random seeds that determine the composition of the sampled $\mathcal{D}_s$ (standard deviations are reported in \cref{sec:complete_vision_results}).
Our GradFix, denoted by $\theta_B + \delta^{A}$, yields a consistent improvement over naive task vector addition ($\theta_B + \tau_A$) even when using a single sample per class to approximate true anti-gradient signs. Notably, the naive addition performs nearly at the level of zero-shot initialization and fails to transfer meaningful task knowledge. This confirms that GradFix effectively suppresses misaligned components of $\tau_A$, preventing negative transfer due to pre-training mismatch.

To further evaluate our approach, we compare it against few-shot fine-tuning of $\theta_B$, denoted as $\theta_{B}^{opt}$, using the same limited target samples and number of training steps (see \cref{app:flops} for a detailed computational cost analysis). This comparison isolates the effect of update construction: both methods observe the same supervision budget, but $\theta_B^{opt}$ relies on iterative parameter optimization, whereas GradFix applies a single masked transport step. GradFix achieves better performance on both ViT-B/16 and ViT-L/14, while exhibiting smaller variance across seeds with respect to few-shot fine-tuning (\cref{tab:mask_vitb16_seeds,tab:mask_vitl14_seeds}). Moreover, as the $\mathcal{D}_s$ size increases, our method continues to provide stable gains, whereas $\theta_B^{opt}$ shows larger fluctuations due to subset composition.
These trends suggest that GradFix is not only stronger on average, but also more reliable under realistic data-sampling variability. In practical low-shot settings, this matters because adaptation quality should remain stable even when the selected subset is not carefully curated. Overall, gradient-sign masking yields a more data-efficient and predictable adaptation mechanism, where robustness is achieved with a single forward-backward pass to obtain the mask $\vm$.

\begin{table}[t]
  \caption{Cross-dataset performance of T5 models on different NLP tasks.}\label{tab:text_transfer}
  \centering
  \begin{tabular}{L{2cm}C{1cm}cccccc}
    \toprule
    \textbf{Model} & $|\mathcal{D}_s^c|$ & \textbf{SNLI} & \textbf{MNLI} & \textbf{RTE} & \textbf{QNLI} & \textbf{SCITAIL} & \textbf{AVG}\\
    \midrule
    $\theta_{B}$ \textit{zero-shot} & - & $34.24$ & $35.21$ & $47.20$ & $50.54$ & $50.38$  & $43.51$ \\
    $\theta_{B}$ \textit{fine-tune} & - & $88.20$ & $86.30$ & $84.40$ & $92.79$ & $95.32$ & $89.40$ \\
    $\theta_{B}$ + $\tau_A$         & - & $31.61$ & $30.75$ & $47.36$ & $50.52$ & $50.46$ & $42.12$ \\
    $\theta_{B}$ + $\delta^\star$       & - & $58.69$ & $69.97$ & $72.93$ & $65.32$ & $62.38$ & $65.86$ \\
    \midrule
    $\theta_{B}^{opt}$ & $50$ & $35.09$ & $26.05$ & $47.29$ & $51.45$ & $51.78$ & $42.33$\\
    \ourrow
    $\theta_{B} + \delta^A$ & $50$ & $68.06$ & $49.68$ & $54.25$ & $60.50$ & $59.89$ & $58.48$ \\
    \bottomrule
  \end{tabular}%
\end{table}

\tit{Transport in the language setting.}
\Cref{tab:text_transfer} reports results on task vector transport across T5 models evaluated on closed-vocabulary text classification benchmarks.
While direct addition of $\tau_A$ to $\theta_B$ fails to transfer knowledge effectively, our method substantially closes the gap toward full fine-tuning, confirming its ability to identify and retain task-relevant directions. Notably, the relative improvement over naive transfer is even larger than in the vision setting, suggesting that sign-based filtering is particularly beneficial when source and target pre-training objectives differ more strongly. This is a challenging regime, since transferring updates between T5v1.1 and FLAN-T5 combines both pre-training and instruction-tuning mismatch. Despite this shift, GradFix remains effective with the same lightweight procedure used in vision. These results confirm that the method is architecture-agnostic in practice and supports reliable task-vector transport in the language domain as well.

\subsection{Task Vector Transport for Model Merging}\label{sec:merging_transport}
We evaluate GradFix in combination with model-merging methods, specifically Task Arithmetic \citep{ilharco2023task} and TIES-Merging \citep{yadav2023tiesmerging}. We consider two settings: \textit{multi-task} (one source model, multiple tasks) and \textit{multi-source} (multiple source models, one task).

\tinytit{Multi-task experiments.}
For this setting, all task vectors are extracted from the same source pre-trained model $\theta_A$ (same pre-training, different downstream tasks) and transported to a fixed target pre-trained model $\theta_B$. For task vectors from distinct tasks, we compare two pipelines. \textbf{Mask--then--Merge}: transport each task vector with GradFix and then merge. \textbf{Merge--then--Mask}: first merge task vectors into $\tau_{\text{merged}}$, then transport the merged vector using a consensus mask computed by estimating per-parameter anti-gradient signs on $\theta_B$ for each task and selecting the most frequent sign at each coordinate. We report results for this setting in \cref{tab:transport_merging}. Here, $\theta_B+\tau_{A,j}$ and $\theta_B+\delta^{A}_{j}$ denote single-task references evaluated on task $j$, respectively using naive addition and GradFix. Direct Task Arithmetic and TIES merging without GradFix perform near zero-shot, indicating strong cross-model misalignment, while \textbf{Merge--then--Mask} gives the best results. Masking each vector first can discard coordinates that would complement each other after merging; merging first preserves them and lets GradFix align one coherent update with the target loss geometry.
\begin{table}[t]
\caption{Merging experiments on ViT-B/16 in multi-task and multi-source settings.}\label{tab:transport_merging}
\centering
\setlength\tabcolsep{8pt}
\resizebox{\linewidth}{!}{
\begin{tabular}{l c c c c c c}
\toprule
\textbf{Pipeline} & \textbf{EUROSAT} & \textbf{SVHN} & \textbf{GTSRB} & \textbf{RESISC45} & \textbf{DTD} & \textbf{AVG} \\
\midrule
$\theta_B$ \textit{zero-shot} & $49.41$ & $50.58$ & $48.29$ & $67.98$ & $55.96$ & $54.44$ \\
\midrule
\rowcolor{gray!10}
\multicolumn{7}{c}{\textbf{Multi-task} \; $(\{\tau_{A,j}\}_{j=1}^{T}\!\rightarrow\!\theta_B)$} \\
\midrule
$\theta_B + \tau_{A,j}$ & $49.58$ & $50.84$ & $49.31$ & $67.87$ & $56.27$ & $54.77$ \\
$\theta_B + \delta^{A}_{j}$ & $65.07$ & $70.19$ & $\mathbf{64.33}$ & $71.42$ & $\mathbf{58.51}$ & $65.90$ \\
\midrule
\multicolumn{7}{l}{\textit{Task Arithmetic}} \\
\hspace{0.8em}Baseline & $49.31$ & $50.99$ & $48.73$ & $68.05$ & $56.54$ & $54.73$ \\
\ourrow
\hspace{0.8em}Mask-then-Merge & $55.90$ & $71.56$ & $59.65$ & $71.40$ & $57.66$ & $63.23$ \\
\ourrow
\hspace{0.8em}Merge-then-Mask & $65.37$ & $72.10$ & $59.55$ & $71.16$ & $57.07$ & $65.05$ \\
\midrule
\multicolumn{7}{l}{\textit{TIES-Merging}} \\
\hspace{0.8em}Baseline & $49.15$ & $50.75$ & $48.95$ & $67.97$ & $56.54$ & $54.67$ \\
\ourrow
\hspace{0.8em}Mask-then-Merge & $50.41$ & $64.28$ & $54.05$ & $69.51$ & $57.13$ & $59.08$ \\
\ourrow
\hspace{0.8em}Merge-then-Mask & $\mathbf{65.62}$ & $\mathbf{72.42}$ & $62.73$ & $\mathbf{71.57}$ & $57.77$ & $\mathbf{66.02}$ \\
\midrule
\rowcolor{gray!10}
\multicolumn{7}{c}{\textbf{Multi-source} \; $({\{\tau_{A_k}\}}_{k=1}^{K}\!\rightarrow\!\theta_B)$} \\
\midrule
\multicolumn{7}{l}{\textit{Task Arithmetic}} \\
\hspace{0.8em}Baseline & $36.94$ & $35.09$ & $30.77$ & $50.54$ & $44.73$ & $39.61$ \\
\ourrow
\hspace{0.8em}Merge-then-Mask & $12.52$ & $15.94$ & $4.20$ & $3.97$ & $2.93$ & $7.91$ \\
\ourrow
\hspace{0.8em}Mask-then-Merge & $\mathbf{65.96}$ & $\mathbf{72.97}$ & $\mathbf{65.80}$ & $71.30$ & $61.01$ & $\mathbf{67.41}$ \\
\midrule
\multicolumn{7}{l}{\textit{TIES-Merging}} \\
\hspace{0.8em}Baseline & $12.69$ & $10.39$ & $2.85$ & $5.81$ & $16.33$ & $9.61$ \\
\ourrow
\hspace{0.8em}Merge-then-Mask & $14.46$ & $15.94$ & $3.09$ & $3.35$ & $2.66$ & $7.90$ \\
\ourrow
\hspace{0.8em}Mask-then-Merge & $65.17$ & $72.58$ & $65.36$ & $\mathbf{71.40}$ & $\mathbf{61.12}$ & $67.13$ \\
\bottomrule
\end{tabular}
}
\end{table}

\tinytit{Multi-source experiments.}
In the multi-source setting, we use $K=5$ source models ${\{\theta_{A_k}\}}_{k=1}^K$ that are pre-trained on different data distributions and then fine-tuned on the same downstream task. We transport all resulting source task vectors to a single fixed target model $\theta_B$, whose pre-training remains unchanged during transport. Here, \textbf{Merge--then--Mask} is not expected to help: because all vectors correspond to the same task, the gradient-sign mask on $\theta_B$ is shared across sources, so consensus masking after merging is effectively equivalent to masking each source separately.
We therefore use \textbf{Mask--then--Merge}: transport each source vector first, then merge the transported vectors. \Cref{tab:transport_merging} shows that this recovers performance from the collapse of direct merging and also improves over single-source transport, suggesting that combining multiple transported updates provides a more robust descent direction.

\subsection{Masking Strategies}\label{sec:masking_ablation}

\begin{table}[t]
   \caption{Performance of $\theta_B$ with oracle or estimated gradient signs under different mask strategies: \textbf{sign agreement} retains matching signs, \textbf{sign forcing} aligns all signs,
   \textbf{magnitude-scaled} uses the product of task and gradient magnitudes, and \textbf{random} assigns signs uniformly. Results averaged over seeds with $|\mathcal{D}_s^c|=1$ on CLIP ViT-B/16. The table includes $\theta_B$ \textit{zero-shot} and $\theta_B$ \textit{fine-tune} as lower and upper reference bounds.}\label{tab:mask_mode_small}
  \centering
  \resizebox{\linewidth}{!}{
  \begin{tabular}{lccccccc}
    \toprule
    \textbf{Model} & \textbf{Mask Strategy} & \textbf{EUROSAT} & \textbf{RESISC45} & \textbf{GTSRB} & \textbf{SVHN} & \textbf{DTD} & \textbf{AVG}\\
    \midrule
     $\theta_{B}$ \textit{zero-shot} & - & $49.41$ & $67.98$ & $48.29$ & $50.58$ & $55.96$  & $54.45$\\
     $\theta_{B}$ \textit{fine-tune} & - & $98.70$ & $95.66$ & $98.65$ & $97.45$ & $83.19$ & $94.73$\\
    \midrule
    \multirow{3}{*}{$\theta_{B} + \delta^{\star}$}
      & sign agreement & $95.06$ & $87.06$ & $82.92$ & $92.04$ & $71.44$ & $85.71$\\
      & sign forcing  & $97.95$ & $93.51$ & $95.94$ & $96.60$ & $80.59$ & $92.92$\\
      & magnitude-scaled & $49.92$ & $67.94$ & $51.63$ & $50.78$ & $56.01$ & $55.25$ \\
    \midrule
    \ourrow
    \multirow{3}{*}{$\theta_{B} + \delta^A$}
    \cellcolor{white}
      & sign agreement (ours) & $61.94$ & $70.05$ & $60.89$ & $71.07$ & $58.32$ & $64.45$\\
      & sign forcing  & $61.32$ & $70.10$ & $60.91$ & $70.52$ & $58.05$ & $64.18$\\
      & magnitude-scaled & $49.51$ & $68.06$ & $49.20$ & $50.71$ & $56.03$ & $54.70$ \\
      \midrule
    $\theta_{B} + \delta^A$ & random & $49.49$ & $67.97$ & $48.41$ & $50.54$ & $56.06$ & $54.50$\\
    \bottomrule
  \end{tabular}}
\end{table}

To analyze the effect of different mask construction strategies on the transport of the task vector $\tau_A$, we compare our primary masking method (sign agreement), with three alternatives: \textbf{sign forcing}, \textbf{magnitude-scaled}, and \textbf{random} masks. \Cref{tab:mask_mode_small} reports results using 1 sample per class on ViT-B/16, averaged across multiple random seeds.

For both $\delta^A$ (\cref{eq:update}) and $\delta^{\star}$ (\cref{eq:t_oracle}), the mask $\vm$ determines which directions of $\tau_A$ are retained. Here, $i$ indexes parameter coordinates. In \textbf{sign agreement}, $\vm$ retains only the coordinates whose signs match those of the reference as in \cref{eq:mask_grad}. In \textbf{sign forcing}, all signs of $\tau_A$ are aligned with the signs of the anti-gradient estimator $\hat{\vs}$, obtaining:
\begin{equation}
    m_i^{sf} = \sign(\tau_{A,i}) \cdot \hat{s}_i, \qquad
    \delta_i^{sf} = \alpha \,(m_i^{sf}\tau_{A,i}) = \alpha\,|\tau_{A,i}|\,\hat{s}_i.
\end{equation}
This flips entries that disagree with the reference and applies a forced-sign update. When the oracle $\tau_B$ is used as the reference, sign forcing generally outperforms sign agreement, as fully leveraging the true task direction maximizes transfer. In contrast, using few-shot anti-gradient-based estimates from $\theta_B$, sign agreement performs slightly better than sign forcing. This is consistent with the fact that the anti-gradient estimate is noisy; forcing all directions can propagate errors, while keeping only agreeing entries provides a more reliable mask.
We also evaluated a \textbf{magnitude-scaled} strategy to determine whether leveraging magnitude information offers additional benefits.
This strategy computes the mask based on the magnitude of both the source task vector and the target reference vector $\rho$ (where $\rho=\tau_B$ for the oracle and $\rho=-g$ for the estimate):
\begin{equation}
    m_i^{ms} = \max(0, \tanh(\tau_{A,i} \cdot \rho_i)), \qquad
    \delta_i^{ms} = \alpha \,(m_i^{ms}\tau_{A,i}).
\end{equation}
This approach assigns a mask value near $1.0$ only when components match in sign and possess significant magnitude, while suppressing mismatches. However, as shown in \cref{tab:mask_mode_small}, this magnitude-aware strategy consistently underperforms sign agreement.
Crucially, this failure persists even in the oracle setting ($\delta^{\star}$), which suggests that while \textit{directions} are transferable across different pre-trainings, parameter \textit{magnitudes} are highly specific to the local loss geometry of each basin. Consequently, enforcing magnitude consistency acts as an overly aggressive filter, discarding valid updates where the models agree on direction but differ on scale. In the limited-data regime ($\delta^A$), this issue is further exacerbated by the noise inherent in estimating gradient magnitudes from small datasets $\mathcal{D}_s$, confirming that the sign structure serves as the most robust proxy for alignment.

Finally, we evaluate \textbf{random mask}, where signs are sampled uniformly from $\mathcal{U}\{-1, +1\}$. Like magnitude-scaled masking, this approach performs close to the zero-shot baseline. This highlights that although sign-based masking outperforms magnitude-based filtering, the gain is not due to masking alone: random sign choices offer no useful signal. Thus, effective transfer relies strictly on the precise geometric alignment provided by the target anti-gradients, rather than just the sparsity of the update.

\subsection{Sensitivity to the Scaling Coefficient}\label{sec:sensitivity}
\begin{figure}[t]
    \centering
    \includegraphics[width=\linewidth]{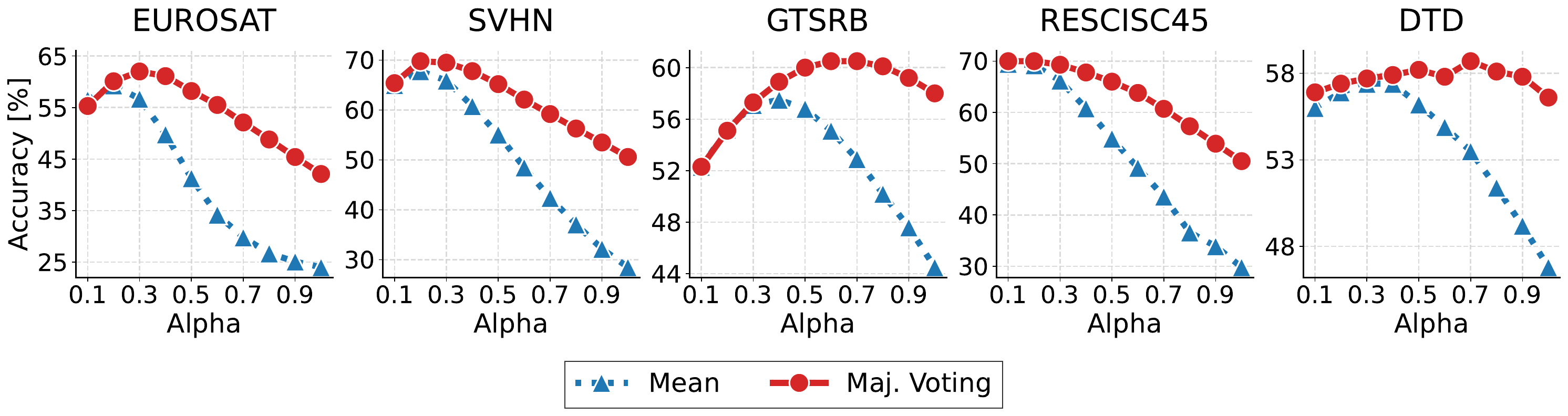}
    \vspace{-1.3em}
    \caption{Accuracy across different $\alpha$ values for mean and majority-vote sign estimation.}
    \label{fig:alpha_sensitivity}
\end{figure}

We investigate the sensitivity of masked transport to the scaling factor $\alpha \in (0,1]$, providing a proxy for how compatible and robust the transported task vector is with the target backbone. In addition to our proposed \emph{majority voting} strategy, we consider a baseline where the estimated sign is taken as the sign of the averaged anti-gradient (\emph{mean}). Results are reported in \cref{fig:alpha_sensitivity}. %

Across datasets, majority voting consistently outperforms the mean strategy for all values of $\alpha$, providing a more reliable approximation of the true gradient sign. Notably, majority voting yields smooth performance curves without sudden drops, and maintains higher accuracy over a broader range of $\alpha$. This difference arises from the aggregation mechanism; averaging gradients before thresholding is highly sensitive to variance and outliers, so even a small subset of misaligned samples can flip the estimated sign and destabilize updates as $\alpha$ grows. Majority voting, instead, depends only on the relative frequency of signs, which concentrates rapidly around the true direction with increasing samples (as shown in \cref{sec:appendix_majority_vote}). As a result, it is inherently more stable and preserves transfer accuracy even in few-shot or noisy regimes.

From a practical perspective, this robustness means that masked transport with majority voting does not require fine-grained tuning of $\alpha$ to achieve good performance. The method remains effective across a wide range of scaling choices, which is particularly valuable when adapting to new datasets where validation data or tuning budgets are limited.

\subsection{Subset Data Selection}
Beyond random sampling, we analyzed whether structured strategies for constructing $\mathcal{D}_s$ improve anti-gradient-sign estimation. We compared random selection with feature-based alternatives from CLIP embeddings: herding, $k$-medoids, and a coreset-style medoid-proximity greedy selection.

Across datasets, structured selectors yield small gains at very low budgets, while random sampling remains competitive and approaches their performance as the number of examples per class increases. Because random sampling adds no embedding/distance-computation overhead and does not require full target-data access, it is often preferable in constrained settings. Full details and curves are provided in \cref{sec:ablation_subset_selection}.

\section{Conclusions \& Future Work}
In this work, we show that the sign structure of anti-gradients provides a powerful and reliable proxy for descent directions in the target loss landscape. The strong performance of our oracle-based analysis, which uses signs from the true task vector, validates this core insight and confirms that effective transfer is possible when the transported task vector is aligned with the target model's local geometry. \textbf{GradFix} approximates this oracle with only a handful of labeled samples to estimate anti-gradient signs, yielding large gains over naive transfer. While our approach is effective and robust in low-data regimes, the remaining gap to the oracle highlights clear directions for future work. In particular, future research can explore better anti-gradient sign estimators, stronger subset-selection strategies, and extensions to other architectures and transfer settings.

\subsubsection*{Acknowledgments}
This work was supported by the PNRR-M4C2 (PE00000013) project ``FAIR - Future Artificial Intelligence Research''. Angelo Porrello was partially supported by the Department of Engineering ``Enzo Ferrari'' through the FAR2025DIP program (CUP E93C25000370005). We acknowledge the CINECA award under the ISCRA initiative for access to high-performance computing resources and technical support.

\subsubsection*{Reproducibility Statement}
We provide the codebase needed to reproduce our results. All hyperparameters used in our experiments are detailed in the appendix. We also provide complete proofs for all claims made in the paper, so that the results and statements can be independently verified.

\bibliography{iclr2026_conference}

@string{anips = {Proceedings of the Advances in Neural Information Processing Systems}}

@string{iclr = {Proceedings of the International Conference on Learning Representations}}

@string{neuripsw = {Proceedings of the Neural Information Processing Systems Workshops}}

@string{aaai = {Proceedings of the AAAI Conference on Artificial Intelligence}}

@string{icml = {Proceedings of the International Conference on Learning Representations}}

@string{iccv = {Proceedings of the IEEE International Conference on Computer Vision}}

@string{cvpr = {Proceedings of the IEEE/CVF Conference on Computer Vision and Pattern Recognition}}

@string{bmvc = {Proceedings of the British Machine Vision Conference}}

@string{naacl = {Proceedings of the Conference of the North American Chapter of the Association for Computational Linguistics}}

@string{acl = {Proceedings of the Annual Meeting of the Association for Computational Linguistics}}

@string{ijcnn = {Proceedings of the International Joint Conference on Neural Networks}}

@inproceedings{ilharco2023task,
  author       = {Gabriel Ilharco and
                  Marco T{\'{u}}lio Ribeiro and
                  Mitchell Wortsman and
                  Ludwig Schmidt and
                  Hannaneh Hajishirzi and
                  Ali Farhadi},
  title        = {Editing models with task arithmetic},
  booktitle    = ICLR,
  year         = {2023}
}

@inproceedings{
      yadav2023tiesmerging,
      title={{TIES}-Merging: Resolving Interference When Merging Models},
      author={Prateek Yadav and Derek Tam and Leshem Choshen and Colin Raffel and Mohit Bansal},
      booktitle=anips,
      year={2023},
}

@inproceedings{ortizjimenez2023tangent,
  title   = {Task Arithmetic in the Tangent Space: Improved Editing of Pre-Trained
             Models},
  author  = {Guillermo Ortiz{-}Jim{\'{e}}nez and
             Alessandro Favero and
             Pascal Frossard},
  booktitle=anips,
  year={2023}
}

@article{hendrycks2020many,
  title     = {The Many Faces of Robustness: A Critical Analysis of Out-of-Distribution Generalization},
  author    = {Dan Hendrycks and Steven Basart and Norman Mu and Saurav Kadavath and Frank Wang and Evan Dorundo and R. Desai and Tyler Lixuan Zhu and Samyak Parajuli and Mike Guo and D. Song and J. Steinhardt and J. Gilmer},
  journal   = iccv,
  year      = {2020},
}

@inproceedings{svhn,
  title={Reading digits in natural images with unsupervised feature learning},
  author={Netzer, Yuval and Wang, Tao and Coates, Adam and Bissacco, Alessandro and Wu, Bo and Ng, Andrew Y},
  booktitle=neuripsw,
  year={2011}
}

@inproceedings{gtsrb,
  title={The German traffic sign recognition benchmark: a multi-class classification competition},
  author={Stallkamp, Johannes and Schlipsing, Marc and Salmen, Jan and Igel, Christian},
  booktitle=ijcnn,
  year={2011},
}

@article{eurosat,
  title={Eurosat: A novel dataset and deep learning benchmark for land use and land cover classification},
  author={Helber, Patrick and Bischke, Benjamin and Dengel, Andreas and Borth, Damian},
  journal={Journal of Selected Topics in Applied Earth Observations and Remote Sensing},
  year={2019},
}

@inproceedings{dtd,
  title={Describing textures in the wild},
  author={Cimpoi, Mircea and Maji, Subhransu and Kokkinos, Iasonas and Mohamed, Sammy and Vedaldi, Andrea},
  booktitle=cvpr,
  year={2014}
}

@inproceedings{mnli,
  author = "Williams, Adina
            and Nangia, Nikita
            and Bowman, Samuel",
  title = "A Broad-Coverage Challenge Corpus for 
           Sentence Understanding through Inference",
  booktitle = "Proceedings of the 2018 Conference of 
               the North American Chapter of the 
               Association for Computational Linguistics:
               Human Language Technologies, Volume 1 (Long
               Papers)",
  year = "2018",
}

@misc{stanford2022stanford,
  title={The Stanford Natural Language Inference (SNLI) Corpus},
  author={Stanford NLP Group and others},
  year={2022}
}

@inproceedings{khot2018scitail,
  title={Scitail: A textual entailment dataset from science question answering},
  author={Khot, Tushar and Sabharwal, Ashish and Clark, Peter},
  booktitle=aaai,
  volume={32},
  year={2018}
}

@inproceedings{radford2021learning,
    Author = {Alec Radford and Jong Wook Kim and Chris Hallacy and Aditya Ramesh and Gabriel Goh and Sandhini Agarwal and Girish Sastry and Amanda Askell and Pamela Mishkin and Jack Clark and Gretchen Krueger and Ilya Sutskever},
    Title = {Learning Transferable Visual Models From Natural Language Supervision},
    Year = {2021},
    booktitle=icml,
}

@inproceedings{wortsman2022model,
  title={Model soups: averaging weights of multiple fine-tuned models improves accuracy without increasing inference time},
  author={Wortsman, Mitchell and Ilharco, Gabriel and Gadre, Samir Yitzhak and Roelofs, Rebecca and Gontijo-Lopes, Raphael and Morcos, Ari S and Namkoong, Hongseok and Farhadi, Ali and Carmon, Yair and Kornblith, Simon and others},
  booktitle=icml,
  year={2022}
}

@inproceedings{frankle2020linear,
  author    = {Jonathan Frankle and Gintare Karolina Dziugaite and Daniel M. Roy and Michael Carbin},
  title     = {Linear Mode Connectivity and the Lottery Ticket Hypothesis},
  booktitle = icml,
  year      = {2020},
}

@article{tsv,
  author       = {Antonio Andrea Gargiulo and
                  Donato Crisostomi and
                  Maria Sofia Bucarelli and
                  Simone Scardapane and
                  Fabrizio Silvestri and
                  Emanuele Rodol{\`{a}}},
  title        = {Task Singular Vectors: Reducing Task Interference in Model Merging},
  journal      = cvpr,
  year         = {2025},
}

@inproceedings{ainsworth2023git,
  author       = {Samuel K. Ainsworth and
                  Jonathan Hayase and
                  Siddhartha S. Srinivasa},
  title        = {Git Re-Basin: Merging Models modulo Permutation Symmetries},
  booktitle    = iclr,
  year         = {2023},
 }

@article{cheng_remote_2017,
	title = {Remote {Sensing} {Image} {Scene} {Classification}: {Benchmark} and {State} of the {Art}},
	journal = {Proceedings of the IEEE},
	author = {Cheng, Gong and Han, Junwei and Lu, Xiaoqiang},
	year = {2017},
}

@inproceedings{devlin2019bert,
  title={Bert: Pre-training of deep bidirectional transformers for language understanding},
  author={Devlin, Jacob and Chang, Ming-Wei and Lee, Kenton and Toutanova, Kristina},
  booktitle = naacl,
  year = {2019}
}

@inproceedings{marczak2025task,
  title   = {No Task Left Behind: Isotropic Model Merging with Common and Task-Specific Subspaces},
  author  = {Daniel Marczak and Simone Magistri and Sebastian Cygert and Bartłomiej Twardowski and Andrew D. Bagdanov and Joost van de Weijer},
  year    = {2025},
  booktitle = icml
}

@inproceedings{wang2018glue,
     title={{GLUE}: A Multi-Task Benchmark and Analysis Platform for Natural Language Understanding},
     author={Wang, Alex and Singh, Amanpreet and Michael, Julian and Hill,Felix and Levy, Omer and Bowman, Samuel R.},
     booktitle=iclr,
     year={2018}
 }

@article{rinaldi2025update,
  title   = {Update Your Transformer to the Latest Release: Re-Basin of Task Vectors},
  author  = {Filippo Rinaldi and Giacomo Capitani and Lorenzo Bonicelli and Donato Crisostomi and Federico Bolelli and Elisa Ficarra and Emanuele Rodolà and Simone Calderara and Angelo Porrello},
  year    = {2025},
  journal = icml,
}

@article{openai2023gpt04,
  title   = {GPT-4 Technical Report},
  author  = {OpenAI and Josh Achiam and Steven Adler et al.},
  year    = {2023},
  journal = {arXiv preprint arXiv: 2303.08774}
}

@inproceedings{liu2023llava,
      title={Visual Instruction Tuning}, 
      author={Liu, Haotian and Li, Chunyuan and Wu, Qingyang and Lee, Yong Jae},
      booktitle=anips,
      year={2023},
}

@inproceedings{imfeld2024transformer,
title={{Transformer Fusion with Optimal Transport}},
author={Moritz Imfeld and Jacopo Graldi and Marco Giordano and Thomas Hofmann and Sotiris Anagnostidis and Sidak Pal Singh},
booktitle=iclr,
year={2024},
}

@inproceedings{cherti2023reproducible,
  title={{Reproducible scaling laws for contrastive language-image learning}},
  author={Cherti, Mehdi and Beaumont, Romain and Wightman, Ross and Wortsman, Mitchell and Ilharco, Gabriel and Gordon, Cade and Schuhmann, Christoph and Schmidt, Ludwig and Jitsev, Jenia},
  booktitle=cvpr,
  year={2023}
}

@inproceedings{
loshchilov2018decoupled,
title={Decoupled Weight Decay Regularization},
author={Ilya Loshchilov and Frank Hutter},
booktitle={International Conference on Learning Representations},
year={2019},
}

@inproceedings{bernstein2018signsgd,
  title={signSGD: Compressed optimisation for non-convex problems},
  author={Bernstein, Jeremy and Wang, Yu-Xiang and Azizzadenesheli, Kamyar and Anandkumar, Animashree},
  booktitle=icml,
  year={2018},
}

@inproceedings{karimireddy2019error,
  title = 	 {Error Feedback Fixes {SignSGD} and other Gradient Compression Schemes},
  author = 	 {Karimireddy, Sai Praneeth and Rebjock, Quentin and Stich, Sebastian U. and Jaggi, Martin},
  booktitle = 	 icml,
  year = 	 {2019}
}

@article{alistarh2017qsgd,
  title={QSGD: Communication-efficient SGD via gradient quantization and encoding},
  author={Alistarh, Dan and Grubic, Demjan and Li, Jerry and Tomioka, Ryota and Vojnovic, Milan},
  journal=anips,
  year={2017}
}

@article{garipov2018loss,
  title   = {Loss Surfaces, Mode Connectivity, and Fast Ensembling of DNNs},
  author  = {Timur Garipov and Pavel Izmailov and Dmitrii Podoprikhin and Dmitry Vetrov and Andrew Gordon Wilson},
  year    = {2018},
  journal = anips
}

@article{hoeffding1963probability,
  title={Probability inequalities for sums of bounded random variables},
  author={Hoeffding, Wassily},
  journal={Journal of the American Statistical Association},
  year={1963},
}

@article{icarl2009,
  title     = {iCaRL: Incremental Classifier and Representation Learning},
  author    = {Sylvestre-Alvise Rebuffi and Alexander Kolesnikov and G. Sperl and Christoph H. Lampert},
  journal   = cvpr,
  year      = {2016},
}

@inproceedings{harvey2014,
  title={Near-Optimal Herding},
  author={Harvey, Nick},
  booktitle=icml,
  year={2014}
}

@book{kmedoids1987,
  author={Kaufman, Leonard and Rousseeuw, Peter J.},
  title={Clustering by means of medoids},
  year={1987},
  publisher={North-Holland}
}

@article{coreset2018,
  title={Active Learning for Convolutional Neural Networks: A Core-Set Approach},
  author={Sener, Ozan and Koltun, Vladlen},
  journal={arXiv preprint arXiv:1708.00489},
  year={2018}
}

@inproceedings{panariello2025accurate,
  title     = {Accurate and Efficient Low-Rank Model Merging in Core Space},
  author    = {Panariello, Aniello and Marczak, Daniel and Magistri, Simone and Porrello, Angelo and Twardowski, Bart{\l}omiej and Bagdanov, Andrew D. and Calderara, Simone and van de Weijer, Joost},
  booktitle = anips,
  year      = {2025}
}

@inproceedings{nasery2025pleas,
  title={PLeaS-Merging Models with Permutations and Least Squares},
  author={Nasery, Anshul and Hayase, Jonathan and Koh, Pang Wei and Oh, Sewoong},
  booktitle=cvpr,
  year={2025}
}

@inproceedings{
stoica2024zipit,
title={ZipIt! Merging Models from Different Tasks without Training},
author={George Stoica and Daniel Bolya and Jakob Brandt Bjorner and Pratik Ramesh and Taylor Hearn and Judy Hoffman},
booktitle=iclr,
year={2024},
}

@inproceedings{li2025enhancing,
  title={Enhancing large language model performance with gradient-based parameter selection},
  author={Li, Haoling and Zhang, Xin and Liu, Xiao and Gong, Yeyun and Wang, Yifan and Chen, Qi and Cheng, Peng},
  booktitle=aaai,
  year={2025}
}

@article{park2024signsgd,
  title={SignSGD with federated voting},
  author={Park, Chanho and Poor, H Vincent and Lee, Namyoon},
  journal={arXiv preprint arXiv:2403.16372},
  year={2024}
}

@article{raffel2020exploring,
  author    = {Colin Raffel and Noam Shazeer and Adam Roberts and Katherine Lee and Sharan Narang and Michael Matena and Yanqi Zhou and Wei Li and Peter J. Liu},
  title     = {Exploring the Limits of Transfer Learning with a Unified Text-to-Text Transformer},
  journal   = {Journal of Machine Learning Research},
  year      = {2020},
}

@article{chung2024scaling,
  title={Scaling instruction-finetuned language models},
  author={Chung, Hyung Won and Hou, Le and Longpre, Shayne and Zoph, Barret and Tay, Yi and Fedus, William and Li, Yunxuan and Wang, Xuezhi and Dehghani, Mostafa and Brahma, Siddhartha and others},
  journal={Journal of Machine Learning Research},
  year={2024},
}

@article{cobbe2021training,
  title={Training verifiers to solve math word problems},
  author={Cobbe, Karl and Kosaraju, Vineet and Bavarian, Mohammad and Chen, Mark and Jun, Heewoo and Kaiser, Lukasz and Plappert, Matthias and Tworek, Jerry and Hilton, Jacob and Nakano, Reiichiro and others},
  year={2021},
  journal={arXiv preprint arXiv:2110.14168},

}

@inproceedings{paperno2016lambadadatasetwordprediction,
  title={The LAMBADA dataset: Word prediction requiring a broad discourse context},
  author={Paperno, Denis and Kruszewski, Germ{\'a}n and Lazaridou, Angeliki and Pham, Ngoc-Quan and Bernardi, Raffaella and Pezzelle, Sandro and Baroni, Marco and Boleda, Gemma and Fern{\'a}ndez, Raquel},
  booktitle=acl,
  year={2016}
}

@inproceedings{ling2017program,
  title={Program induction by rationale generation: Learning to solve and explain algebraic word problems},
  author={Ling, Wang and Yogatama, Dani and Dyer, Chris and Blunsom, Phil},
  booktitle=acl,
  year={2017}
}

@inproceedings{
porrello2025a,
title={A Second-Order Perspective on Model Compositionality and Incremental Learning},
author={Angelo Porrello and Lorenzo Bonicelli and Pietro Buzzega and Monica Millunzi and Simone Calderara and Rita Cucchiara},
booktitle=iclr,
year={2025},
}

@article{panariello2025modular,
  title={Modular Embedding Recomposition for Incremental Learning},
  author={Panariello, Aniello and Frascaroli, Emanuele and Buzzega, Pietro and Bonicelli, Lorenzo and Porrello, Angelo and Calderara, Simone},
  journal=bmvc,
  year={2025}
}

@article{kaplan2020scaling,
  title={Scaling laws for neural language models},
  author={Kaplan, Jared and McCandlish, Sam and Henighan, Tom and Brown, Tom B and Chess, Benjamin and Child, Rewon and Gray, Scott and Radford, Alec and Wu, Jeffrey and Amodei, Dario},
  journal={arXiv preprint arXiv:2001.08361},
  year={2020}
}

@inproceedings{
porrello2026dataless,
title={Dataless Weight Disentanglement in Task Arithmetic via Kronecker-Factored Approximate Curvature},
author={Angelo Porrello and Pietro Buzzega and Felix Dangel and Thomas Sommariva and Riccardo Salami and Lorenzo Bonicelli and Simone Calderara},
booktitle=iclr,
year={2026},
}
\bibliographystyle{iclr2026_conference}

\newpage
\subsection*{Use of Large Language Models (LLMs)}
We made limited use of a large language model only for light editorial refinement, such as adjusting phrasing, improving grammar and enhancing overall clarity. The model was not involved in generating research ideas, designing experiments, interpreting results, or authoring scientific content.
\appendix
\section*{Appendix}

\section{Anti-Gradient Sign Estimator Guarantee}
\label{sec:appendix_majority_vote}

We formalize why majority-vote estimation of anti-gradient signs provides a reliable proxy for true descent directions in the limited-data regime.

\begin{lemma}[Majority-Vote Gradient Sign]\label{lem:majority_sign}
Let $g_i := \nabla_{\theta_i} \mathcal{L}_B(\theta_B)$ denote the true target-loss gradient at parameter coordinate $i$, and let $g_i^{(n)}$ be i.i.d.~per-sample gradients:
\begin{equation}
g_i^{(n)} = g_i + \varepsilon_i^{(n)}, \quad \mathbb{E}[\varepsilon_i^{(n)}] = 0,
\end{equation}
where $\varepsilon_i^{(n)}$ is symmetric noise around zero. Define
\begin{equation}
p_i := \Pr\left[\sign(g_i^{(n)}) = \sign(g_i)\right],
\end{equation}
where $p_i$ represents the probability that the sign of a single per-sample gradient $g_i^{(n)}$ matches the true gradient $g_i$.

Then, for all $i$ with $g_i \neq 0$, $p_i > 1/2$.

In particular, the majority-vote estimator
\begin{equation}
\hat s_i := \sign\left(-\sum_{n=1}^{N} \sign(g_i^{(n)})\right)
\end{equation}
recovers the correct sign with probability at least
\begin{equation}
\Pr[\hat s_i = \sign(-g_i)] \ge 1 - \exp\left(- 2N (p_i - 1/2)^2\right).
\end{equation}

\end{lemma}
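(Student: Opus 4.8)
The plan is to prove the two claims of the lemma separately: first the pointwise inequality $p_i > 1/2$, then the concentration bound for the majority vote.

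\textbf{Step 1: Showing $p_i > 1/2$ from symmetry.} Fix a coordinate $i$ with $g_i \neq 0$; without loss of generality assume $g_i > 0$ (the case $g_i < 0$ is symmetric by reflecting the noise). Since $g_i^{(n)} = g_i + \varepsilon_i^{(n)}$ with $\varepsilon_i^{(n)}$ symmetric about $0$, I would write $p_i = \Pr[\varepsilon_i^{(n)} > -g_i]$ (handling the measure-zero ties appropriately, or assuming the noise has no atom at $-g_i$). By symmetry of $\varepsilon_i^{(n)}$, $\Pr[\varepsilon_i^{(n)} > -g_i] = \Pr[\varepsilon_i^{(n)} < g_i]$, and since $g_i > 0$ this probability strictly exceeds $\Pr[\varepsilon_i^{(n)} \le 0] = 1/2$ (using that the symmetric distribution puts positive mass — or, more carefully, that $\Pr[0 < \varepsilon_i^{(n)} < g_i] > 0$, which follows if $-g_i$ is not an atom-free boundary; this is the ``mild independence/regularity assumption'' the statement alludes to). Hence $p_i > 1/2$.

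\textbf{Step 2: Reducing the majority vote to a sum of i.i.d. Bernoullis.} Define $Z_n := \mathbbm{1}\{\mathrm{sign}(g_i^{(n)}) = \mathrm{sign}(g_i)\}$, which are i.i.d. Bernoulli$(p_i)$ by the i.i.d. assumption on the per-sample gradients. The majority-vote estimator $\hat s_i$ agrees with $\mathrm{sign}(g_i)$ precisely when strictly more than half the votes are correct, i.e. when $\sum_{n=1}^N Z_n > N/2$. So $\Pr[\hat s_i \neq \mathrm{sign}(g_i)] \le \Pr[\sum_n Z_n \le N/2] = \Pr[\frac{1}{N}\sum_n Z_n \le 1/2]$. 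Since $\E[\frac{1}{N}\sum_n Z_n] = p_i > 1/2$, this is a lower-tail deviation of the empirical mean below its expectation by at least $p_i - 1/2 > 0$.

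\textbf{Step 3: Applying Hoeffding.} Each $Z_n \in [0,1]$, so Hoeffding's inequality gives $\Pr[\frac{1}{N}\sum_n Z_n \le p_i - t] \le \exp(-2Nt^2)$ for $t > 0$. Taking $t = p_i - 1/2$ yields $\Pr[\hat s_i \neq \mathrm{sign}(g_i)] \le \exp(-2N(p_i - 1/2)^2)$, and the claimed bound follows by complementation. I would also note that the statement of \cref{lem:majority_vote} in the main text additionally identifies $\mathrm{sign}(g_i)$ (the full-data gradient sign, equivalently the large-$N$ limit of the vote) with $\mathrm{sign}(\tau_{B,i})$ in the single-epoch-GD regime discussed after \cref{eq:mask_grad}, which is why $p_i$ there is phrased relative to $\tau_{B,i}$; the probabilistic content is identical.

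The main obstacle is Step 1: the assertion $p_i > 1/2$ is not true for an arbitrary symmetric noise distribution without a regularity condition ruling out the degenerate case where $-g_i$ sits outside the support or on an atom, so the proof must make explicit the ``mild assumption'' (e.g. the noise density is positive in a neighborhood of the origin, or more weakly $\Pr[|\varepsilon_i^{(n)}| < |g_i|] > 0$) that makes the strict inequality hold; everything after that is the standard majority-vote/Hoeffding argument.
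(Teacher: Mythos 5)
Your proposal is correct and follows essentially the same argument as the paper's proof: reduce $p_i > 1/2$ to $\Pr[-g_i < \varepsilon_i^{(n)} < 0] > 0$ via symmetry, then treat the per-sample sign-agreement indicators as i.i.d.\ Bernoulli$(p_i)$ variables and apply Hoeffding's one-sided bound with deviation $t = p_i - 1/2$. Your added observation is a fair one — the paper asserts $\Pr[-g_i < \varepsilon_i^{(n)} < 0] > 0$ without justification, whereas symmetry alone does not guarantee it (e.g.\ noise supported away from $(-g_i, g_i)$), so the ``mild assumption'' you make explicit (positive mass near the origin) is genuinely needed for the strict inequality.
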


\begin{proof}
We divide the proof into two parts.

\tit{Step 1: Bias of single-sample signs.}

Define the indicator random variable $X_n := \mathbbm{1}\{\sign(g_i^{(n)}) = \sign(g_i)\} \in \{0,1\}$. The success probability of a single sample is $p_i = \Pr[X_n=1] = \Pr[\sign(g_i^{(n)}) = \sign(g_i)]$.

Without loss of generality, assume $g_i > 0$. The event of a successful sign match is $g_i^{(n)} > 0$, which can be rewritten as $g_i + \varepsilon_i^{(n)} > 0$, or $\varepsilon_i^{(n)} > -g_i$. Since the noise $\varepsilon_i^{(n)}$ is symmetric around zero, we know that $\Pr[\varepsilon_i^{(n)} > 0] = \Pr[\varepsilon_i^{(n)} < 0] = 1/2$. Because $g_i > 0$, the interval $(-g_i, 0)$ is non-empty. The probability of the noise falling into this interval, $\Pr[-g_i < \varepsilon_i^{(n)} < 0]$, is positive; therefore, the total probability of success is:
\begin{align*}
p_i &= \Pr[\varepsilon_i^{(n)} > -g_i] \\
&= \Pr[\varepsilon_i^{(n)} > 0] + \Pr[-g_i < \varepsilon_i^{(n)} < 0] \\
&= 1/2 + \Pr[-g_i < \varepsilon_i^{(n)} < 0].
\end{align*}
Since $\Pr[-g_i < \varepsilon_i^{(n)} < 0] > 0$, it follows that $p_i > 1/2$.

\tit{Step 2: Concentration of majority vote.}

The majority-vote estimator $\hat s_i$ succeeds if
\begin{equation}
\sum_{n=1}^N X_n > N/2.
\end{equation}

Now, we bound the probability of failure, which is the event that the sum of correct sign estimates is less than or equal to $N/2$. We can express this event as a deviation from the expected value of the sum.
The expected value of the sum is $\mathbb{E}\left[\sum_{n=1}^N X_n\right] = \sum_{n=1}^N \mathbb{E}[X_n] = Np_i$. Thus, the deviation is:

\begin{equation}
    \sum_{n=1}^N X_n - \mathbb{E}\left[\sum_{n=1}^N X_n\right] = \sum_{n=1}^N X_n - Np_i
\end{equation}

If we rewrite the event of failure, $\sum_{n=1}^N X_n \le N/2$, in terms of this deviation we obtain:
\begin{equation}
    \sum_{n=1}^N X_n - Np_i \le N/2 - Np_i = -N(p_i - 1/2)
\end{equation}

According to Hoeffding's inequality \citep{hoeffding1963probability} for a sum of i.i.d.~random variables $X_n \in [0,1]$, we have:
\begin{equation}
\Pr\left(\sum_{n=1}^N X_n - Np_i \le -N(p_i - 1/2)\right) \le \exp\left(-\frac{2\left(N(p_i - 1/2)\right)^2}{\sum_{n=1}^N (1-0)^2}\right)
\end{equation}
The denominator simplifies to $\sum_{n=1}^N 1^2 = N$. Substituting this back into the inequality gives:
\begin{equation}
\Pr\left[\sum_{n=1}^N X_n \le N/2\right] \le \exp\left(-\frac{2N^2(p_i - 1/2)^2}{N}\right) = \exp\left(-2N(p_i - 1/2)^2\right)
\end{equation}
The probability of correct recovery is the complement of this failure probability:
\begin{equation}
\Pr[\hat s_i = \sign(-g_i)] = \Pr\left[\sum_{n=1}^N X_n > N/2\right] = 1 - \Pr\left[\sum_{n=1}^N X_n \le N/2\right]
\end{equation}
Therefore, we obtain the final bound:
\begin{equation}
\Pr[\hat s_i = \sign(-g_i)] \ge 1 - \exp\left(-2N(p_i - 1/2)^2\right).
\end{equation}

\end{proof}

This result formalizes the intuition that, under mild assumptions on per-sample gradient noise, the majority-vote sign over a small batch of samples provides a reliable approximation to the true descent direction. The probability of correct recovery grows exponentially with both the number of samples \(N\) and the signal-to-noise ratio \(p_i - 1/2\). In practice, this suggests that even a few labeled samples can suffice to construct a gradient-sign mask that preserves most descent-aligned components of the source task vector.

\section{Additional results}
\subsection{Vision results}\label{sec:complete_vision_results}
\begin{table}[H]
  \caption{Cross-dataset performance of ViT-B/16 ($A$: \texttt{datacomp xl s13b b90k}, $B$: \texttt{laion2b s34b b88k}) models averaged across random seeds.}\label{tab:mask_vitb16_seeds}
  \centering
  \resizebox{\linewidth}{!}{
  \begin{tabular}{ll cccccc}
    \toprule
    \textbf{Model} & $|\mathcal{D}_s^c|$ & \textbf{EUROSAT} & \textbf{RESISC45} & \textbf{GTSRB} & \textbf{SVHN} & \textbf{DTD} & \textbf{AVG}\\
    \midrule
     $\theta_{B}$ \textit{zero-shot} & &49.41 & 67.98 & 48.29 & 50.58 & 55.96  & 54.45{\small±7.30}\\
     $\theta_{B}$ \textit{fine-tune} & &98.70 & 95.66 & 98.65 & 97.45 & 83.19 & 94.73{\small±5.94}\\
     $\theta_{B}$ + $\tau_A$ & &49.58 & 67.87 & 49.31 & 50.84 & 56.27 & 54.78{\small±7.05}\\
     $\theta_{B}$ + $\delta^\star$ \textit{(oracle)}& &95.06 & 87.06 & 82.92 & 92.04 & 71.44 & 85.71{\small±8.30}\\
     \midrule
     $\theta_{B}^{opt}$ & 1 & 56.61{\small±6.06} & 69.25{\small±0.96} & 56.08{\small±2.87} & 61.32{\small±4.09} & 56.21{\small±0.88} & 59.89{\small±6.05}\\
     \ourrow
    $\theta_{B} + \delta^A$ & 1 & 61.94{\small±0.43} & 70.05{\small±0.56} & 60.88{\small±2.85} & 71.07{\small±1.82} & 58.32{\small±0.30} & 64.45{\small±5.47}\\
    $\theta_{B}^{opt}$ & 2 & 59.49{\small±1.43} & 71.20{\small±1.13} & 61.70{\small±0.76} & 62.01{\small±4.40} & 57.00{\small±0.54} & 62.29{\small±5.31}\\
    \ourrow    $\theta_{B} + \delta^A$ & 2 & 65.07{\small±1.10} & 71.42{\small±0.90} & 64.33{\small±1.05} & 70.19{\small±4.55} & 58.51{\small±0.10} & 65.96{\small±5.06}\\
     $\theta_{B}^{opt}$ & 5 & 61.99{\small±7.29} & 73.01{\small±0.48} & 63.08{\small±1.41} & 67.03{\small±3.93} & 59.65{\small±0.80} & 64.95{\small±5.81}\\
     \ourrow
     $\theta_{B} + \delta^A$ & 5 & 66.05{\small±1.21} & 71.57{\small±0.88} & 66.61{\small±0.42} & 73.59{\small±0.82} & 60.02{\small±0.20} & 67.57{\small±4.96}\\
    $\theta_{B}^{opt}$ & 10 & 59.98{\small±3.77} & 72.27{\small±1.65} & 64.54{\small±1.12} & 67.85{\small±1.02} & 60.96{\small±0.33} & 65.12{\small±4.97}\\
     \ourrow
     $\theta_{B} + \delta^A$ & 10 & 66.59{\small±1.83} & 72.05{\small±0.59} & 66.02{\small±1.59} & 74.82{\small±1.22} & 60.18{\small±0.40} & 67.93{\small±5.38}\\
    $\theta_{B}^{opt}$ & 20 & 60.59{\small±3.94} & 74.22{\small±0.66} & 65.51{\small±0.80} & 67.19{\small±0.65} & 62.59{\small±0.08} & 66.02{\small±5.10}\\
     \ourrow
    $\theta_{B} + \delta^A$ & 20 & 67.05{\small±0.41} & 72.29{\small±0.14} & 66.42{\small±0.47} & 74.11{\small±0.59} & 60.92{\small±0.08} & 68.15{\small±4.84}\\
     $\theta_{B}^{opt}$ & 50 & 58.80{\small±2.45} & 75.88{\small±0.83} & 64.91{\small±0.53} & 67.58{\small±2.91} & 64.13{\small±0.11} & 66.26{\small±5.97}\\
     \ourrow
     $\theta_{B} + \delta^A$ & 50 & 66.94{\small±0.46} & 72.26{\small±0.22} & 66.13{\small±0.14} & 74.07{\small±1.52} & 61.35{\small±0.03} & 68.15{\small±4.75}\\
    \bottomrule
  \end{tabular}}
\end{table}

\begin{table}[H]
  \caption{Cross-dataset performance of ViT-L/14 ($A$: \texttt{datacomp xl s13b b90k}, $B$: \texttt{laion2b s32b b82k}) models averaged across random seeds.}
  \label{tab:mask_vitl14_seeds}
  \centering
  \resizebox{\linewidth}{!}{
  \begin{tabular}{llSSSSSS}
    \toprule
    \textbf{Model} & $|\mathcal{D}_s^c|$ & \textbf{EUROSAT} & \textbf{RESISC45} & \textbf{GTSRB} & \textbf{SVHN} & \textbf{DTD} & \textbf{AVG}\\
    \midrule
     $\theta_{B}$ \textit{zero-shot} & &62.80 & 73.12 & 56.12 & 37.28 & 63.35 & 58.53{\small±12.35}\\
     $\theta_{B}$ \textit{fine-tune} & &98.95 & 97.06 & 99.16 & 97.80 & 83.56 & 95.31{\small±6.13}\\
     $\theta_{B}$ + $\tau_A$ & &62.77 & 73.49 & 56.03 & 39.09 & 63.56 & 58.99{\small±11.80}\\
     $\theta_{B}$ + $\delta^\star$ \textit{(oracle)}& &96.75 & 90.30 & 88.65 & 92.60 & 72.66 & 88.19{\small±8.52}\\
     \midrule
     $\theta_{B}^{opt}$ & 1 & 64.65{\small±5.90} & 74.54{\small±0.57} & 63.97{\small±4.50} & 62.51{\small±4.58} & 63.76{\small±0.27} & 65.89{\small±5.61} \\
     \ourrow
     $\theta_{B} + \delta^A$ & 1 & 69.67{\small±1.44} & 76.45{\small±1.33} & 66.82{\small±0.84} & 70.15{\small±5.18} & 65.50{\small±0.74} & 69.72{\small±4.46} \\
     $\theta_{B}^{opt}$ & 2 & 70.76{\small±1.77} & 76.62{\small±0.26} & 69.91{\small±1.89} & 45.23{\small±1.87} & 64.97{\small±0.21} & 65.50{\small±11.23} \\
     \ourrow
     $\theta_{B} + \delta^A$ & 2 & 74.10{\small±2.00} & 76.97{\small±0.68} & 71.55{\small±2.73} & 54.31{\small±2.54} & 66.10{\small±0.59} & 68.61{\small±8.44} \\
     $\theta_{B}^{opt}$ & 5 & 69.75{\small±1.64} & 75.41{\small±2.94} & 73.25{\small±0.62} & 67.11{\small±2.39} & 66.72{\small±0.13} & 70.45{\small±3.86} \\
     \ourrow
     $\theta_{B} + \delta^A$ & 5 & 75.59{\small±2.24} & 76.82{\small±0.48} & 73.14{\small±1.23} & 74.41{\small±2.22} & 66.95{\small±0.69} & 73.31{\small±3.88} \\
     $\theta_{B}^{opt}$ & 10 & 70.36{\small±3.82} & 78.07{\small±1.76} & 74.11{\small±0.31} & 60.43{\small±3.68} & 69.36{\small±0.27} & 70.46{\small±6.45} \\
     \ourrow
     $\theta_{B} + \delta^A$ & 10 & 73.74{\small±1.58} & 77.59{\small±0.57} & 74.94{\small±0.73} & 75.88{\small±2.80} & 67.41{\small±0.48} & 73.77{\small±3.86} \\
     $\theta_{B}^{opt}$ & 20 & 78.74{\small±2.96} & 80.77{\small±1.17} & 74.65{\small±0.28} & 65.99{\small±2.12} & 71.40{\small±0.35} & 74.31{\small±5.65} \\
     \ourrow
     $\theta_{B} + \delta^A$ & 20 & 74.87{\small±0.71} & 78.16{\small±0.33} & 74.90{\small±0.55} & 75.79{\small±0.90} & 67.55{\small±0.24} & 74.15{\small±3.83} \\
     $\theta_{B}^{opt}$ & 50 & 77.07{\small±1.66} & 82.16{\small±0.31} & 75.38{\small±0.45} & 67.83{\small±1.64} & 73.81{\small±0.08} & 75.25{\small±4.90} \\
     \ourrow
     $\theta_{B} + \delta^A$ & 50 & 74.75{\small±0.89} & 78.27{\small±0.18} & 74.61{\small±0.91} & 76.79{\small±1.26} & 67.77{\small±0.01} & 74.27{\small±3.86} \\
    \bottomrule
  \end{tabular}}
\end{table}

\section{Ablations}
\subsection{Sign Agreement Analysis}
\begin{figure}[H]
    \centering
    \includegraphics[width=\linewidth]{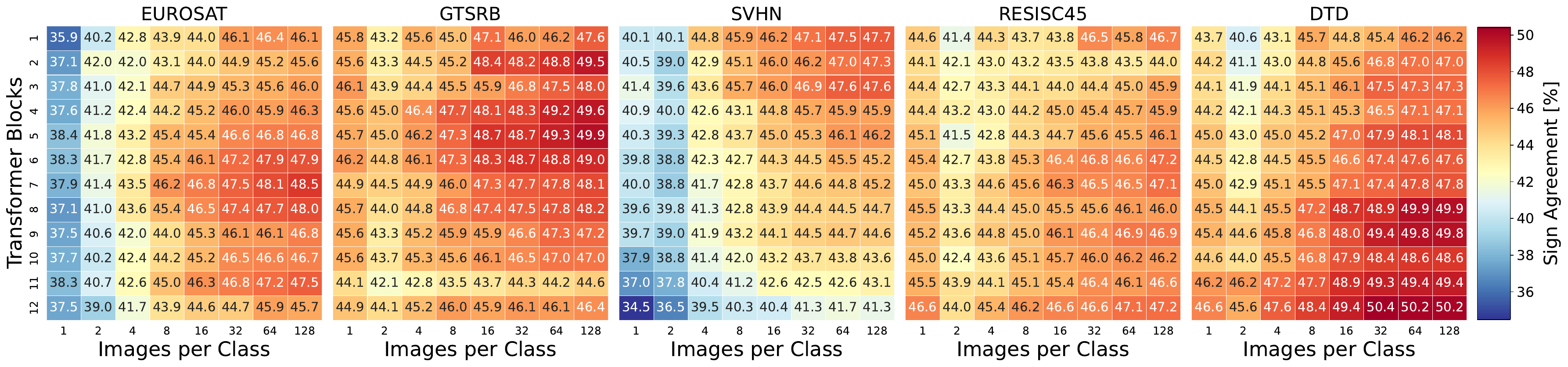}
    \caption{Sign agreement per block for ViT-B/16. The heatmaps show the percentage of sign agreements between the $\vm^{\star}$ and $\vm$ constructed by computing the signs of the anti-gradients at different $|\mathcal{D}_s^c|$ budgets.}\label{fig:sign_agreement_block_suppl}
\end{figure}

To understand the nature of the misalignment between source and target models, we show in \cref{fig:sign_agreement_block_suppl} a heatmap of anti-gradient sign agreement. Specifically, we computed the element-wise agreement between the signs of the target task vector ($\tau_B$) and the estimated anti-gradients at different data budgets.

This heatmap reveals that agreement is not uniform across different layers, and there is no simple, global sign correlation. This observation provides a direct explanation for the failure of naive task vector transfer. Without a mechanism to correct for this misalignment, simply adding the task vector introduces harmful directions that degrade performance. Our method, in contrast, actively addresses this by using the target model's gradients to construct a mask, ensuring that only the useful components of the task vector are transferred and aligned with the target loss landscape.
\Cref{tab:mask_vitb16_seeds,tab:mask_vitl14_seeds} demonstrate that $\delta^A$ consistently matches or outperforms $\theta_B^{opt}$ across both ViT-B/16 and ViT-L/14 models, even at larger $|\mathcal{D}_s^c|$ budgets. Moreover, $\delta^A$ exhibits substantially lower standard deviation across seeds, confirming its robustness to the random choice of supervision data and its efficiency compared to direct fine-tuning under identical data constraints.

\subsection{Subset Data Selection}
\label{sec:ablation_subset_selection}

In the main experiments, we randomly selected the subset $\mathcal{D}_s$ from $\mathcal{D}$. As an ablation study, we now evaluate different heuristics—random, herding, $k$-medoids, and coreset—for constructing $\mathcal{D}_s$ to estimate anti-gradient signs, aiming to understand how subset selection impacts the accuracy and efficiency of anti-gradient estimation. Each strategy is evaluated at $b \in \{1,2,5,10,20\}$ examples per class. For herding \citep{icarl2009,harvey2014}, $k$-medoids \citep{kmedoids1987}, and coreset \citep{coreset2018}, images are embedded using the frozen CLIP image encoder of the source model $\theta_A$. Let $f$ denote this frozen image encoder, normalized features are computed as $\vz(x) = {f(x)}/{\|f(x)\|}$, and let $\mathcal{D}^c := \{(x,y) \in \mathcal{D} \;|\; y=c\}$.

\tinytit{Random.} Sample uniformly from $\mathcal{D}$ without replacement.

\tinytit{Herding.}
Greedily select representatives $\mathcal{D}_s^c \subseteq \mathcal{D}^c$ of size $b$ to match the class mean feature by minimizing the discrepancy of the running average:
\begin{equation}
\mathcal{D}_s^c \;=\; \argmin_{|\mathcal{D}_s^c|=b}
\Big\|\, \vmu_c - \frac{1}{|\mathcal{D}_s^c|}\sum_{x \in \mathcal{D}_s^c} \vz(x)\,\Big\|_2,
\qquad  \vmu_c := \frac{1}{|\mathcal{D}^c|} \sum_{x \in \mathcal{D}^c} \vz(x).
\end{equation}

\tinytit{$k$-Medoids.}
Select $\mathcal{D}_s^c \subseteq \mathcal{D}^c$ of size $b$ that minimizes the in-class assignment cost under distance $d$:
\begin{equation}
\mathcal{D}_s^c \;=\; \argmin_{|\mathcal{D}_s^c|=b} \sum_{x \in \mathcal{D}^c} \min_{s \in \mathcal{D}_s^c} d\big(\vz(x), \vz(s)\big).
\end{equation}

\tinytit{Coreset (medoid-proximity greedy).}
Adopt a medoid-proximity greedy selection within $\mathcal{D}^c$:
\begin{equation}
\begin{aligned}
\text{(i) Seed:} \quad & s_1 = \argmin_{j \in \mathcal{D}^c} \sum_{k \in \mathcal{D}^c} d\left(\vz(j), \vz(k)\right), \\
\text{(ii) Greedy:} \quad & s_t = \argmin_{j \in \mathcal{D}^c \setminus \mathcal{D}_{s,t-1}^c} \min_{s \in \mathcal{D}_{s,t-1}^c} d\left(\vz(j), \vz(s)\right), \quad t = 2, \dots, b,
\end{aligned}
\end{equation}
where $\mathcal{D}_{s,t-1}^c = \{s_1, \dots, s_{t-1}\}$ denotes the set of already selected samples.
This strategy emphasizes prototypical samples to reduce variance in small budgets and is closely related to coreset selection for active learning~\citep{coreset2018}.

For each dataset and budget $b$, $\mathcal{D}_s = \bigcup_c \mathcal{D}_s^c$, anti-gradient signs are estimated as discussed in~\cref{sec:limited_data_regime} using majority-vote aggregation, and masked transport is applied to $\theta_B$. We report in \cref{fig:ds_selection} the accuracy for each strategy as a function of images per class, with standard deviation across random seeds for the random baseline.
\begin{figure}[t]
  \centering
  \includegraphics[width=\linewidth]{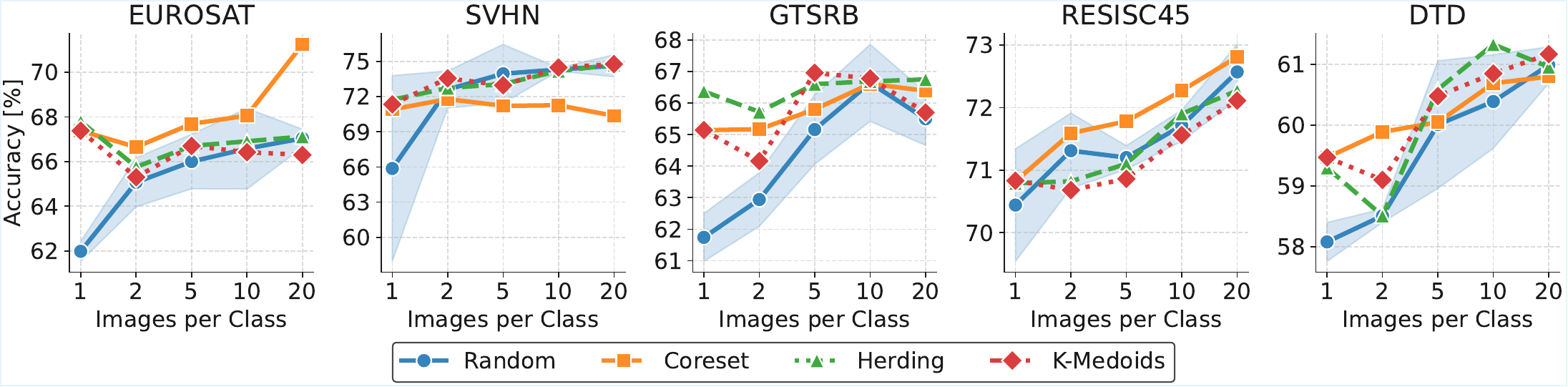}
  \caption{Accuracy across different numbers of images per class for various $\mathcal{D}_s$ construction heuristics.}\label{fig:ds_selection}
\end{figure}

Across datasets and budgets, structured selectors (coreset, herding, $k$-medoids) often provide small but consistent gains over random selection in the few-shot regime. Yet, random selection remains a strong baseline, with performance approaching that of structured methods as $b$ increases, while incurring no memory or computation overhead from embedding or distance pre-computation. Importantly, this shows that our approach remains effective even when the subset $\mathcal{D}_s$ is chosen at random, validating its applicability in strict few-shot settings where sophisticated selection strategies are infeasible. Trends are stable across seeds, with variance shrinking as $b$ grows. Finally, structured methods require access to the full target dataset $\mathcal{D}$, which is unrealistic in privacy-constrained or large-scale settings.

\subsection{Random Task Vector Transport}
\label{sec:random_tv_ablation}
\begin{table}[H]
\caption{Ablation study comparing ViT-B/16 models with GradFix ($\delta^A$) against a randomized task vector baseline ($\delta^\text{random}$). The random vector preserves the mean and standard deviation of the source task vector but lacks its structural information.}\label{tab:random_tv_ablation}
\centering
\resizebox{\linewidth}{!}{
\begin{tabular}{L{1.8cm} c c c c c c c}
    \toprule
    \textbf{Model} & $|\mathcal{D}_s^c|$ & \textbf{EUROSAT} & \textbf{SVHN} & \textbf{GTSRB} & \textbf{RESISC45} & \textbf{DTD} & \textbf{AVG} \\
    \midrule
    $\theta_{B}$ \textit{zero-shot} & - & $49.41$ & $50.58$ & $48.29$ & $67.98$ & $55.96$ & $54.44$ \\
    $\theta_{B}$ \textit{fine-tune} & - & $98.70$ & $97.45$ & $98.65$ & $95.66$ & $83.19$ & $94.73$ \\
    $\theta_{B}$ + $\tau_A$ & - & $49.58$ & $50.84$ & $49.31$ & $67.87$ & $56.27$ & $54.77$ \\
    $\theta_{B} + \delta^\star$ & - & $95.06$ & $92.04$ & $82.92$ & $87.06$ & $71.44$ & $85.70$ \\
    $TransFusion$ & - & $50.12$ & $53.26$ & $50.24$ & $67.99$ & $56.70$ & $55.66$ \\
    \midrule
    $\theta_{B}^{opt}$ & $1$ & $56.61$ & $61.32$ & $56.08$ & $69.25$ & $56.21$ & $59.89$ \\
    $\theta_{B} + \delta^\text{random}$ & $1$ & $49.86$ & $51.43$ & $48.56$ & $68.06$ & $56.06$ & $54.79$ \\
    \ourrow
    $\theta_{B} + \delta^A$ & $1$ & $61.94$ & $71.07$ & $60.88$ & $70.05$ & $58.32$ & $64.45$ \\
    \midrule
    $\theta_{B}^{opt}$ & $2$ & $59.49$ & $62.01$ & $61.70$ & $71.20$ & $57.00$ & $62.28$ \\
    $\theta_{B} + \delta^\text{random}$ & $2$ & $50.07$ & $51.45$ & $48.64$ & $68.14$ & $56.05$ & $54.87$ \\
    \ourrow
    $\theta_{B} + \delta^A$ & $2$ & $65.07$ & $70.19$ & $64.33$ & $71.42$ & $58.51$ & $65.90$ \\
    \midrule
    $\theta_{B}^{opt}$ & $5$ & $61.99$ & $67.03$ & $63.08$ & $73.01$ & $59.65$ & $64.95$ \\
    $\theta_{B} + \delta^\text{random}$ & $5$ & $50.24$ & $51.79$ & $48.65$ & $68.11$ & $57.74$ & $55.31$ \\
    \ourrow
    $\theta_{B} + \delta^A$ & $5$ & $66.05$ & $73.59$ & $66.61$ & $71.57$ & $60.02$ & $67.57$ \\
    \bottomrule
\end{tabular}}
\end{table}

To assess whether the effectiveness of \methname arises from the transfer of meaningful structural knowledge from the source task vector $\tau_A$, rather than solely from the gradient masking mechanism, we performed an ablation using a random task vector. Specifically, we replaced the original source vector $\tau_A$ with a randomly generated vector $\tau_{\text{random}}$, drawn from a normal distribution matched to the element-wise statistics of $\tau_A$ to ensure a fair comparison:
\begin{equation}
    \tau_{\text{random}} \sim \mathcal{N}(\mu_{\tau_A}, \sigma_{\tau_A}^2)
\end{equation}
where $\mu_{\tau_A}$ and $\sigma_{\tau_A}$ are the mean and standard deviation of the parameters in $\tau_A$. We then applied our proposed gradient-sign masking method to this random vector using the same target gradient mask $\vm$. The resulting update is defined as:
\begin{equation}
    \theta_{B}^{\text{rand}} = \theta_B + \alpha (\vm \odot \tau_{\text{random}})
\end{equation}
We evaluated this baseline using supervision budgets of $|\mathcal{D}_s^c| \in \{1, 2, 5\}$ samples per class.

The results are reported in \cref{tab:random_tv_ablation}. Notably, the performance of the random vector is often comparable to, or only marginally better than, the zero-shot baseline ($\theta_B$).
These findings confirm that the gradient-sign mask $\vm$ alone is insufficient to induce strong performance; it must act on a vector that contains valid task-specific information, as provided by $\tau_A$.

\subsection{Generalization and Robustness}
\label{app:generalization}

To ensure that merging the transported task vector does not compromise the robust capabilities of the new backbone, we evaluated performance on a held-out support set, \textbf{ImageNet-R} \citep{hendrycks2020many}. This allows us to assess whether the target model's superior zero-shot capabilities are preserved after transfer.

As shown in \cref{tab:generalization}, transferring the task vector via GradFix ($\theta_B + \delta^A$) substantially improves accuracy on the downstream tasks compared to the zero-shot baseline, while fully preserving the superior zero-shot generalization of the target model. This confirms that our method effectively adapts the model to the target task without sacrificing the general robustness of the updated pre-training.

\begin{table*}[t]
    \caption{\textbf{Generalization analysis.} We report accuracy on each downstream task and on the held-out ImageNet-R support set. GradFix improves task performance while preserving the target model's zero-shot capabilities.}\label{tab:generalization}
    \resizebox{\textwidth}{!}{
        \begin{tabular}{l*{10}{c}}
        \toprule
        \multirow{2}{*}{\textit{Method}}
        & \multicolumn{2}{c}{\textbf{EUROSAT}}
        & \multicolumn{2}{c}{\textbf{SVHN}}
        & \multicolumn{2}{c}{\textbf{GTSRB}}
        & \multicolumn{2}{c}{\textbf{RESISC45}}
        & \multicolumn{2}{c}{\textbf{DTD}} \\
        & Task & Supp. & Task & Supp. & Task & Supp. & Task & Supp. & Task & Supp. \\
        \midrule

        $\theta_{A}$ \textit{zero-shot}
        & $49.08$ & $68.80$
        & $47.00$ & $68.80$
        & $43.37$ & $68.80$
        & $58.94$ & $68.80$
        & $47.50$ & $68.80$ \\

        $\theta_{B}$ \textit{zero-shot}
        & $49.41$ & $79.78$
        & $50.58$ & $79.78$
        & $48.29$ & $79.78$
        & $67.98$ & $79.78$
        & $55.96$ & $79.78$ \\

        $\theta_{A}$ \textit{ft}
        & $98.58$ & $65.93$
        & $93.59$ & $67.77$
        & $98.23$ & $65.98$
        & $92.43$ & $65.35$
        & $79.04$ & $65.02$ \\

        \ourrow
        $\theta_{B} + \delta^{A}$
        & $67.80$ & $79.77$
        & $69.74$ & $79.73$
        & $65.23$ & $80.17$
        & $72.14$ & $79.88$
        & $59.63$ & $79.27$ \\

        \bottomrule
    \end{tabular}
    }

\end{table*}

\section{Datasets and Supervision Proportions}\label{sec:supp_datasets}
In this section, we provide detailed information about the datasets used in our experiments and compute, for each one, the supervision proportions corresponding to our subset budgets $|\mathcal{D}_s^c|$. Recall that $|\mathcal{D}_s^c|$ denotes the number of labeled examples \emph{per class} used to estimate anti-gradient signs. The resulting percentages indicate what fraction of the full training set those few-shot budgets represent.

\subsection{Visual Datasets}
\begin{itemize}
  \item \textbf{EuroSAT}: A dataset based on Sentinel-2 satellite images covering 13 spectral bands, consisting of \num{27000} labeled and geo-referenced samples across 10 classes \citep{eurosat}.
  \item \textbf{SVHN}: A real-world image dataset from Google Street View house numbers, containing \num{73257} labeled digits across 10 classes \citep{svhn}.
  \item \textbf{GTSRB}: The German Traffic Sign Recognition Benchmark, comprising \num{39209} training images and \num{12630} test images across 43 classes \citep{gtsrb}.
  \item \textbf{RESISC45}: A scene classification dataset with \num{31500} RGB images $256\times 256$ from Google Earth, covering 45 scene classes with 700 images per class \citep{cheng_remote_2017}.
  \item \textbf{DTD}: The Describable Textures Dataset, consisting of \num{5640} images organized into 47 categories inspired by human perception \citep{dtd}.
\end{itemize}

\begin{table}[h!]
\centering
\caption{Supervision proportions for visual datasets. $|\mathcal{D}_s^c|$ denotes examples per class. Each cell shows the total dataset percentage.}
\begin{tabular}{lccccccccc}
\toprule
 &  &  & \multicolumn{6}{c}{$|\mathcal{D}_s^c|$} \\
 \textbf{Dataset} & \textbf{\# Samples} & \textbf{Classes} & $1$ & $2$ & $5$ & $10$ & $20$ & $50$ \\
\midrule
EUROSAT & $27{,}000$ & $10$ & $0.04\%$ & $0.07\%$ & $0.19\%$ & $0.37\%$ & $0.74\%$ & $1.85\%$ \\
SVHN & $73{,}257$ & $10$ & $0.01\%$ & $0.03\%$ & $0.07\%$ & $0.14\%$ & $0.27\%$ & $0.68\%$ \\
GTSRB & $39{,}209$ & $43$ & $0.11\%$ & $0.22\%$ & $0.55\%$ & $1.10\%$ & $2.19\%$ & $5.48\%$ \\
RESISC45 & $31{,}500$ & $45$ & $0.14\%$ & $0.29\%$ & $0.71\%$ & $1.43\%$ & $2.86\%$ & $7.14\%$ \\
DTD & $5{,}640$ & $47$ & $0.83\%$ & $1.66\%$ & $4.15\%$ & $8.30\%$ & $16.60\%$ & $41.49\%$ \\
\bottomrule
\end{tabular}
\end{table}

\subsection{Textual Datasets}
\begin{itemize}
  \item \textbf{SNLI}: Stanford Natural Language Inference dataset, containing \num{570000} sentence pairs labeled for entailment, contradiction, or neutral \citep{stanford2022stanford}.
  \item \textbf{MNLI}: Multi-Genre Natural Language Inference dataset, comprising \num{433000} sentence pairs annotated with textual entailment information across various genres \citep{mnli}.
  \item \textbf{RTE}: Recognizing Textual Entailment dataset, with \num{2490} examples for training, 277 for validation, and \num{3000} for testing, divided into two classes \citep{wang2018glue}.
  \item \textbf{QNLI}: Question Natural Language Inference dataset, containing \num{104743} training examples divided into two classes \citep{wang2018glue}.
  \item \textbf{SCITAIL}: A science entailment dataset built from science question answering, with \num{23596} training examples divided into two classes \citep{khot2018scitail}.
\end{itemize}

\begin{table}[h!]
\centering
\caption{Supervision proportions for textual datasets. $|\mathcal{D}_s^c|$ denotes examples per class. Each cell shows the total dataset percentage.}
\begin{tabular}{lccc}
\toprule
\textbf{Dataset} & \textbf{\# Samples} & \textbf{Classes} & $|\mathcal{D}_s^c|=50$ \\
\midrule
SNLI & $570{,}000$ & $3$ & $0.03\%$ \\
MNLI & $433{,}000$ & $3$ & $0.03\%$ \\
RTE & $2{,}490$ & $2$ & $4.02\%$ \\
QNLI & $104{,}743$ & $2$ & $0.10\%$ \\
SCITAIL & $23{,}596$ & $2$ & $0.42\%$ \\
\bottomrule
\end{tabular}
\end{table}

\section{Hyperparameter Selection}\label{sec:hyperparams}
We evaluated the optimal task vector application coefficient $\alpha$ using the validation set of each dataset, following standard practice~\citep{ilharco2023task,tsv,marczak2025task}. For $\delta^\star$, the optimal $\alpha$ is equal to $1$ across all datasets. In \cref{tab:alphas}, we summarize the coefficients corresponding to the optimal performance of $\delta^A$ for each dataset.
$\theta_{B}^{\text{opt}}$ is obtained for each dataset by training with the AdamW optimizer (learning rate $1\mathrm{e}{-5}$) on the corresponding subset $\mathcal{D}_s$ used to compute the gradient mask $\vm$, with a single gradient-descent step.
\begin{table}[H]
\centering
\begin{tabular}{lccccc}
    \toprule
\textbf{Architecture} & \textbf{EUROSAT} & \textbf{SVHN} & \textbf{GTSRB} & \textbf{RESISC45} & \textbf{DTD} \\
\midrule
ViT-B/16  & $0.3$ & $0.3$ & $0.6$ & $0.2$ & $0.7$ \\
ViT-L/14  & $0.4$ & $0.5$ & $0.5$ & $0.2$ & $0.5$ \\
\midrule
\textbf{Architecture} & \textbf{SNLI} & \textbf{MNLI} & \textbf{RTE} & \textbf{QNLI} & \textbf{SCITAIL} \\
\midrule
T5v1.1 & $0.3$ & $0.1$ & $0.2$ & $0.1$ & $0.7$ \\
\bottomrule
\end{tabular}
\caption{Optimal hyperparameters for $\delta^A$ across CLIP and T5 architectures.}
\label{tab:alphas}
\end{table}

\section{Computational Cost Analysis}\label{app:flops}

Following standard scaling-law approximations~\citep{kaplan2020scaling}, we adopt the commonly used per-parameter FLOP estimates to compare the computational efficiency of our approach against baselines. Let $P$ denote the number of model parameters. We assume the following operation costs:

\begin{itemize}
    \item \textbf{1 forward pass:} $\approx 2P$ FLOPs
    \item \textbf{1 backward pass:} $\approx 4P$ FLOPs
    \item \textbf{Optimizer update (Adam/AdamW):} $\approx 10P$ FLOPs per step (accounting for momentum updates, squaring, bias correction, and parameter write-back).
\end{itemize}

Given this, we derive the costs for GradFix, the few-shot baseline $\theta_{B}^{opt}$, and full fine-tuning.

\tit{GradFix cost.}
GradFix requires a single gradient computation on the target model followed by the masking operation.
\begin{itemize}
    \item Forward + Backward pass: $6P$ FLOPs
    \item Mask construction and masked task-vector application (element-wise multiplication and addition): $\approx 2P$ FLOPs
\end{itemize}
Total GradFix Cost: $\approx 8P$ FLOPs.

\tit{$\mathbf{\theta_{B}^{opt}}$ cost (one-step fine-tuning).}
The $\theta_{B}^{opt}$ baseline represents a one-step fine-tuning update on the target model.
\begin{itemize}
    \item Forward + Backward pass: $6P$ FLOPs
    \item Adam optimizer update: $10P$ FLOPs
\end{itemize}
Total $\theta_{B}^{opt}$ Cost: $\approx 16P$ FLOPs.

\tit{Full fine-tuning cost.}
Standard fine-tuning (used to produce the original source task vectors) typically involves 2000 training steps.
\begin{itemize}
    \item Per step (Forward + Backward + Optimizer): $6P + 10P = 16P$ FLOPs
    \item For 2000 steps: $16P \times 2000$
\end{itemize}
\vspace{0.5em}
Comparing the methods reveals significant efficiency gaps:
\begin{itemize}
    \item $\theta_{B}^{opt}$ vs. GradFix: $\approx 2\times$ more FLOPs
    \item Full fine-tuning vs. GradFix: $\approx 4,000\times$ more FLOPs
    \item Full fine-tuning vs. $\theta_{B}^{opt}$: $\approx 2,000\times$ more FLOPs
\end{itemize}
Euro

\end{document}